\newtheorem{definition}{Definition}[]
\newtheorem{problem}{Problem}[]
\newtheorem{theorem}{Theorem}[]
\begin{document}

\title{Dynamic Location Search for Identifying Maximum Weighted Independent Sets in Complex Networks}



\author{Enqiang Zhu, Chenkai Hao, Chanjuan Liu, Yongsheng Rao*
        


\thanks{The work is supported by the National Natural Science Foundation
of China under Grant 62272115. (Corresponding author: Yongsheng Rao.)}
\thanks{Enqiang Zhu and Yongsheng Rao are with the Institute of Computing Science and Technology, Guangzhou University, Guangzhou {\rm 510006}, China (email: zhuenqiang@gzhu.edu.cn; rysheng@gzhu.edu.cn)}
\thanks{Chenkai Hao is with Cyberspace Institute of Advanced Technology, Guangzhou University, Guangzhou {\rm 510006}, China (email: haochenkai@e.gzhu.edu.cn)}
\thanks{Chanjuan Liu is with the School of Computer Science and Technology, Dalian University of Technology, Dalian {\rm 116024}, China (email: chanjuanliu@dlut.edu.cn)}
}



\maketitle

\begin{abstract}

While Artificial intelligence (AI), including Generative AI, are effective at generating high-quality traffic data and optimization solutions in intelligent transportation systems (ITSs), these techniques often demand significant training time and computational resources, especially in large-scale and complex scenarios. To address this, we introduce a novel and efficient algorithm for solving the maximum weighted independent set (MWIS) problem, which can be used to model many ITSs applications, such as traffic signal control and vehicle routing.
Given the NP-hard nature of the MWIS problem, our proposed algorithm, DynLS, incorporates three key innovations to solve it effectively. First, it uses a scores-based adaptive vertex perturbation (SAVP) technique to accelerate convergence, particularly in sparse graphs. Second, it includes a region location mechanism (RLM) to help escape local optima by dynamically adjusting the search space. Finally, it employs a novel variable neighborhood descent strategy, ComLS, which combines vertex exchange strategies with a reward mechanism to guide the search toward high-quality solutions.
Our experimental results demonstrate DynLS's superior performance, consistently delivering high-quality solutions within 1000 seconds. DynLS outperformed five leading algorithms across 360 test instances, achieving the best solution for 350 instances and surpassing the second-best algorithm, Cyclic-Fast, by 177 instances. Moreover, DynLS matched Cyclic-Fast's convergence speed, highlighting its efficiency and practicality. This research represents a significant advancement in heuristic algorithms for the MWIS problem, offering a promising approach to aid AI techniques in optimizing intelligent transportation systems.
\end{abstract}

\begin{IEEEkeywords}
Maximum weighted independent set, Local search, Exchange strategy, Heuristic, Intelligent transportation system.
\end{IEEEkeywords}

\section{Introduction}\label{introduction}
\IEEEPARstart{W}{ith} ongoing advancements in intelligent transportation systems (ITSs), the role of Artificial intelligence (AI), including Generative AI technology, has gained significant prominence \cite{10444919}. This technology offers innovative approaches and methodologies to address challenges such as traffic congestion, route planning, and traffic management \cite{10778265}. While this technique excels at producing high-quality traffic data and optimization solutions, it often necessitates lengthy training periods and considerable computational resources, especially when tackling complex optimization problems \cite{chen2022evolutionary}. Additionally, various optimization challenges may require specific model adjustments and fine-tuning tailored to particular scenarios. As a result, there is a pressing need to develop solutions that can enhance the capabilities of generative AI in handling diverse optimization problems within intelligent transportation systems.

In the intricate landscape of ITSs, numerous key issues can be framed as graph theory problems, such as the maximum independent set (MIS) problem \cite{wang2024joint}. Efficiently resolving these problems is vital for optimizing the structure of traffic networks and enhancing the allocation of traffic resources. For example, when optimizing traffic signal control, selecting non-conflicting combinations of signal phases to maximize traffic flow can be compared to the MIS problem \cite{10404636}. Likewise, when planning vehicle routes \cite{haba2025routing}, identifying the optimal paths to circumvent congested roads is closely related to solutions to the MIS problem. Roughly speaking, the MIS problem is to identify a subset of vertices in a graph where no two vertices are adjacent. 
This problem is well-known for being \textbf{NP}-hard and has been extensively studied due to its significance in both theoretical aspects and practical applications \cite{andrist2023hardness}. 

Recently, a growing focus has been on the MIS problem in weighted graphs, where vertices are associated with positive weights. This gives rise to the maximum weighted independent set (MWIS) problem, which seeks to find an MIS with the highest weight \cite{10.1145/3618260.3649791}. Clearly, the MWIS problem is also \textbf{NP}-hard \cite{andrist2023hardness}, indicating that no polynomial time algorithm can solve MWIS unless $\textbf{P}=\textbf{NP}$. The MWIS problem has been applied in diverse fields \cite{banerjee2024error,duran2024survey,PMID:38718835,E220135,pattanayak2024maximal,E220448}. As a specific example, we examine the data routing and scheduling problem within vehicular ad hoc networks (VANETs) \cite{10816263}. This problem aims to optimize communication between vehicles and roadside units (RSUs) and among the cars themselves \cite{liu2015cooperative}. It involves the scheduling and routing of data transmissions to minimize conflicts and maximize resource utilization. We represent tentative schedules (TSs) as potential data transmissions, which can serve as a pending request via either infrastructure-to-vehicle or vehicle-to-vehicle communication. A graph can be constructed in the following manner: Each vertex represents a TS defined by a sender, a receiver, and the transmission mode, where the vertex weights indicate the throughput of data; the edges represent conflicts between TSs. Conflicts may arise, for example, when a vehicle serves as both a sender and a receiver in multiple TSs simultaneously, when RSUs are limited to transmitting only one data item at a time, or when a receiver cannot accept data from various senders concurrently. Consequently, identifying an MWIS in the graph yields an optimal scheduling solution that enhances transmission efficiency, reduces conflicts, and improves the overall performance of ITSs.

{Traffic signal control and coordination within ITSs can also be addressed using the MWIS approach \cite{9847014}. Initially, the intersection signal control problem is modeled as a graph (called phase conflict graph), wherein each signal phase is represented as a node. An edge is established between two nodes if the corresponding phases cannot be activated simultaneously due to crossing conflicts in their controlled traffic flows, indicating their conflict relationship. Furthermore, to assess the significance of each signal phase, weights are assigned based on factors such as traffic flow, reduction in average delay time, vehicle queue length, and the throughput capacity of the phase. After constructing the phase conflict graph, the objective is to identify an MWIS, that is, a set of non-conflicting signal phases that maximizes the total weight. This selection ensures that during each signal cycle, the chosen combination of phases facilitates the maximum number of vehicles passing smoothly through the intersection, thereby enhancing overall traffic efficiency.
}


{
The various applications of the MWIS problem in intelligent transportation systems inspire us to pursue efficient MWIS solvers. Despite the widespread use of MWIS across diverse fields, creating efficient algorithms for it continues to be a considerable challenge.} Traditional branch-and-bound exact algorithms perform satisfactorily for small-scale graphs, but their effectiveness diminishes as the graph size increases. To address this issue, researchers have applied specific data reduction rules to reduce the search space for large-scale graphs, simplifying the problem and reducing computational complexity \cite{gellner2021boosting,zheng2020efficient,xiao2021efficient,gu2021towards,grossmann2023finding}. However, these rules are not universally applicable; they are effective only for sparse graphs or graphs with specific properties and have limited applicability for general or dense graphs. Compared to the high computational complexity of exact algorithms, heuristic algorithms utilize efficient search strategies to identify near-optimal solutions rapidly \cite{zhu2024dual}. These algorithms effectively manage large-scale and complex traffic optimization scenarios \cite{li2022network,sudheera2022real}.  In this context, we propose an MWIS algorithm based on heuristic strategies, hoping to help address various optimization challenges within ITSs. 

Previous studies show that existing heuristic MWIS algorithms often suffer from slow convergence and suboptimal solutions \cite{nogueira2018hybrid,grossmann2023finding}.  
{For instance, HILS is recognized as an outstanding local search algorithm for MWIS \cite{nogueira2018hybrid}, yet it still depends on the random selection of vertices to formulate the initial solution. This approach can result in a subpar initial solution, potentially prolonging the convergence time. Furthermore, its perturbation strategy is single-patterned, making it challenging to ensure the effectiveness of the perturbation process. The HtWIS algorithm, on the other hand, prioritizes time efficiency at the expense of solution quality, failing to continuously optimize the solution \cite{gu2021towards}. Structure-based algorithms utilize numerous reduction rules during their execution \cite{gellner2021boosting}, which may lead to memory overflow issues. Memetic algorithms \cite{grossmann2023finding}, which are based on evolutionary processes, incorporate graph partitioning techniques, and require the maintenance of multiple solutions concurrently. The operations of mutation, selection, and crossover can further extend the convergence time of the solutions.
} 

{
For this, we present a novel local search algorithm for the MWIS problem.
Initially, the algorithm employs data reduction techniques, limiting their execution to a brief period. It then constructs the initial solution using two distinct methods based on the classification of the input graph, thereby enhancing the quality of the initial solution compared to previous local search approaches. Furthermore, the optimization process integrates three key innovations.} 
The first is a scores-based adaptive vertex perturbation (SAVP), which employs four scoring functions to refine the pattern of vertex insertion. {This diversifies the perturbation process, preventing the search from becoming stuck in local optima.} The second is a region location mechanism (RLM) that adjusts the search area when the optimization process stagnates. {This makes our algorithm significantly different from previous ones. When global region search becomes challenging, our algorithm shifts its focus to exploring areas that have been less frequently examined, thereby uncovering potential enhancements.} The third is a novel variable neighborhood descent framework (ComLS) that integrates various vertex exchange strategies. {This framework is derived from ILS-VND \cite{nogueira2018hybrid} and has been further developed with more complex vertex exchange strategies, enabling the generation of a greater variety of neighborhood structures. A reward mechanism is employed to efficiently guide the use of these vertex exchange strategies, further enhancing the quality of solutions.} These advancements generate a new local search algorithm, DynLS, tailored to solve the MWIS problem. This method leverages both global and local characteristics, enabling it to adaptively modify search patterns and enhance its capability to escape local optima. Experimental results indicate that DynLS outperforms leading MWIS algorithms in accuracy while achieving a fast convergence rate.

The remainder of this paper is structured as follows. In Section \ref {sec-2}, we provide a comprehensive overview of related work. Section \ref {sec-3} encompasses preliminary knowledge. The primary components of our algorithm, developed through a rigorous research process, are elaborated in Section \ref {sec-4}. Our experimental results, obtained through meticulous experimentation, are presented in Section \ref {sec-5}, and lastly, Section \ref {sec-con} offers the conclusion.

\section{Related work}\label{sec-2}
Lamm et al. \cite{lamm2019exactly} introduced a branch-and-reduce algorithm for the MWIS problem. This approach employs reduction rules before branching, effectively minimizing the size of the graphs. Additionally, it utilizes local search algorithms to establish lower bounds for the solutions, significantly accelerating the search process. Zheng et al. \cite{zheng2020efficient} developed ExactMWIS, which recursively applies reduction rules to explore smaller graphs, successfully balancing efficiency with accuracy. Xiao et al. \cite{xiao2021efficient} formulated comprehensive reduction rules for MWIS, employing greedy algorithms for calculating lower bounds and weighted clique-covering methods to determine upper bounds. Gellner et al. \cite{gellner2021boosting} proposed a generalization of data reduction and transformation rules for MWIS, achieving effective solutions for many challenging instances but requiring considerable memory resources. Despite many strategies designed to enhance the efficiency of exact algorithms, a notable need remains for support in managing large-scale instances. Consequently, heuristic approaches \cite{ZHANG2024121567} are developed to yield high-quality solutions within acceptable time and space constraints. This paper focuses on designing heuristic algorithms for the MWIS problem and highlights relevant works in this domain.

Among the various heuristics, the greedy algorithm is noted for its simplicity, speed, and intuitive nature \cite{krivelevich2024greedy}. However, its principal drawback is the lack of consistency in producing optimal solutions. Consequently, it is often used with other methods, such as the local search algorithm \cite{zhang2023tivc}. The local search algorithm typically initiates with a starting solution and refines iteratively through insertion, deletion, or swapping operations. Empirical evidence indicates that local search algorithms frequently yield high-quality solutions in a relatively short amount of time. Andrade et al. \cite{andrade2012fast} pioneered iterated local search (ILS) to the MIS problem. They incorporated two vertex exchange strategies: the $(1,2)$-swap and the $(2,3)$-swap, where the $(i,j)$-swap entails removing $i$ vertices from the solution and adding $j$ vertices to it. Their research found that the $(1,2)$-swap was more efficient to implement compared to the $(2,3)$-swap. The $(2,3)$-swap was employed only when the $(1,2)$-swap could not be executed. The ILS approach also uses a perturbation technique to avoid the algorithm becoming trapped in local optima, that is, forcibly inserting $k$ non-solution vertices into the solution when search efficiency declines. In 2018, Nogueira et al. \cite{nogueira2018hybrid} applied ILS to address the MWIS problem. They introduced two types of swap: $(1,2)$-swap and $(\omega,1)$-swap while proposing a hybrid iterated local search (HILS) algorithm for MWIS. The weighted $(1,2)$-swap involves exchanging elements to enhance the solution's overall weight. Conversely, $(\omega,1)$-swap entails adding a vertex to the solution and removing its $\omega$ neighbors under the condition that the weight of the added vertex surpasses the combined weights of its $\omega$ neighbors. The HILS algorithm employs variable neighborhood descent (VND) \cite{CAVERO2022108680} to optimize these two vertex exchange strategies. While the HILS algorithm demonstrates robust performance in solving MWIS, it is characterized by slow convergence. 

In recent years, many researchers have combined local search with data reduction techniques to improve the performance of the ILS algorithm for MWIS on large graphs. In 2020, Zheng et al. \cite{zheng2020efficient} introduced two greedy algorithms, DtSingle and DtTwo. DtSingle utilizes a single vertex reduction technique, while DtTwo combines single vertex reduction and two vertex reduction techniques, specifically focusing on low-degree reductions. In 2021, Gu et al. \cite{gu2021towards} proposed the reducing and tie-breaking framework for  MWIS, comprising two primary phases: the tie-breaking and reduction phases. Although the tie-breaking strategy is heuristic and does not guarantee optimal solutions, this method still exhibits strong efficiency. In 2022, Dong et al. \cite{dong2022local} introduced a metaheuristic algorithm for MWIS, named METAMIS, which is based on the greedy randomized adaptive search procedure (GRASP). They introduced an AAP-move to assess if the solution could be improved and used a relaxed solution to guide local search. They showed that their algorithm works well with certain datasets, but it is uncertain whether it will perform well in all cases. In 2023, Gro$\beta$mann et al. \cite{grossmann2023finding} developed a memetic algorithm for MWIS by combining an evolutionary algorithm initially designed for MIS with local search strategies. This algorithm significantly emphasizes advanced data reduction techniques and investigates how various sequences influence the solution's accuracy and efficiency. However, excessive data reduction leads to significant time overhead, and the algorithm may exhibit suboptimal performance on certain dense graphs.  Nevertheless, it is still regarded as one of the most effective algorithms for MWIS.

The algorithms discussed previously typically perform well in either accuracy or efficiency but rarely in both. The algorithm proposed in this paper seeks to find a balance between the quality of the solution and computational efficiency, aiming to provide a high-quality solution within a reasonable time.

\section{Preliminaries}\label{sec-3}
All graphs under consideration are finite, simple, undirected, and weighted. A \textit{weighted graph} \( G \), with a vertex set \( V \) and edge set \( E \), can be expressed as a triple \( G = (V, E, \sigma) \), where \( \sigma \) is a weight function of \( G \), assigning each vertex \( v \in V \) a positive real number. The value \( \sigma_G(v) \) denotes the \textit{weight} of the vertex \( v \). For a subset \( S \subseteq V \), its \textit{weight} is represented as \( \sigma_G(S) \) and is defined as \( \sigma_G(S) = \sum_{v \in S} \sigma_G(v) \). For clarity, we utilize \( V(G) \), \( E(G) \), and \( \sigma_G \) to denote the vertex set, edge set, and weight function of a given graph \( G \), respectively. 
For two vertices $x, y\in V(G)$ in a graph $G$, we refer to the length of the shortest path between $x$ and $y$ as the \textit{distance} between $x$ and $y$ in $G$, denoted by $d_G(x,y)$. The longest distance between two vertices in $G$ is called the \textit{diameter} of $G$, denoted by $dia(G)$. If $d_G(x,y)=1$, then there is an edge (denoted by $xy$) connecting them, and $x$ and $y$ are said to be \textit{adjacent} each other. Generally, if  $d_G(x,y)=i (\geq 1)$, then one is called an \textit{$i$th-level neighbor} of the other in $G$. A first-level neighbor of a vertex is simply called a \textit{neighbor} of the vertex. The set of all $i$th-level neighbors of a vertex $v \in V(G)$ is called the \textit{$i$th-level neighborhood} of $v$ in $G$, denoted by $N_G^i(v)$,  i.e., $N_G^i(v)=\{u\mid u\in V(G) ~\text{and}~ d_G(u,v)=i\}$, and $N_G^i[v]= N_G^i(v)\cup \{v\}$ is called the \textit{closed ith-level neighborhood} of $v$ in $G$. $N_G^1(v)$ and $N_G^1[v]$ is often represented by $N_G(v)$ and $N_G[v]$, respectively. The \textit{$i$th-level neighborhood} of $S$ in $G$, denoted by $N_G^i(S)$, is the set of vertices in $V(G)\setminus S$ that has distance $i$ with a vertex in $S$, i.e., $N_G^i(S)=\{u  \mid u\in V(G)\setminus S ~\text{and}~ d_G(u,v)=i ~\text{for some} ~v \in S\}$, and $N_G^i(S)=N_G^i(S)\cup S$ is called  the  \textit{closed $i$th-level neighborhood} of $S$ in $G$. Analogously, $N_G^1(S)$ and  $N_G^1[S]$ is represented by $N_G(S)$ and $N_G[S]$, respectively. For a vertex $v\in V(G)$, the \textit{degree} of $v$ in $G$, denoted by $d_G(v)$, is the number of neighbors of $v$ in $G$, i.e.,  $d_G(v)=|N_G(v)|$. The \textit{maximum degree} of $G$, denoted by $\Delta(G)$ is defined as \(\Delta(G) = \max \{d_G(v) | v \in V(G)\}\). The \textit{average degree} of a graph $G$, denoted by $\overline{d}(G)$ is the average of degrees among all vertices of $G$, i.e., $\overline{d}(G)=\frac{\sum_{v\in V(G)} d_G(v)}{|V(G)|}$. We use $G-S$ to denote the resulting graph from $G$ by deleting all vertices in $S$ and their incident edges. When $S$ contains only one vertex $v$, we replace $G-\{v\}$ by $G-v$ simply.  The \textit{subgraph of $G$ induced by $S$}, or \textit{induced subgraph} simply, denoted as $G[S]$, is defined as $G[S]=G-(V(G)\setminus S)$.

\begin{definition}\label{def-IS}
Let \( G \) be a graph. A subset \( I \subseteq V(G) \) is referred to as an \textit{independent set} (IS) if the induced subgraph \( G[I] \) contains no edges. An IS is termed \textit{maximum} if no larger IS exists. The \textit{independence number} of \( G \), denoted by \( \alpha(G) \), represents the size of a maximum independent set (MIS).
\end{definition}

\begin{definition}\label{def-WIS}
Let $I$ be an IS of a graph $G$. If $\sigma_G(I)\geq \sigma_G(I')$ for any other IS $I'$ of $G$, then $I$ is called a \textit{maximum weighted independent set} (MWIS)  of $G$.  The \textit{weighted independence number} of $G$, denoted by $\alpha_{\omega}(G)$, is the weight of an MWIS.
\end{definition}

\begin{problem} \label{pro-WIS}
\normalfont{[\textbf{The MWIS problem}]} Given a graph $G$, the MWIS problem is to find an MWIS of $G$. 
\end{problem}

Given an independent set \( I \) in a graph \( G \), the \textit{tightness} of a vertex \( v \in V(G) \setminus I \), denoted as \( \tau_G(v) \), is the number of \( v \)'s neighbors that are included in \( I \), i.e., \( \tau_G(v) = |\{u \mid u \in N_G(v) \text{ and } u \in I\}| \). If the set \( I \cup \{v\} \) also forms an independent set in \( G \), then \( v \) is referred to as a \textit{free vertex related to $I$}, or simply a \textit{free vertex}. In this scenario, \( \tau_G(v) = 0 \).

\section{The DynLS Algorithm}
\label{sec-4}
DynLS maintains two types of solutions: the global best solution (denoted as \(BS\)) and the current solution (denoted as \(CS\)). {Figure \ref{fig1:flowchart} illustrates the overall process of DynLS. Initially, DynLS takes an undirected weighted graph \( G \) as input and applies reduction rules \cite{gellner2021boosting} to reduce the original graph \( G \) into a small kernel graph. Following this step, it utilizes the CIS algorithm to generate an initial solution. The process then progresses into a dynamic local search phase, where the solution (\( CS \)) is continuously refined until the time limit is reached. Ultimately, the algorithm returns the best solution \( BS \). Specifically, DynLS adjusts its execution process based on the parameter \( r_G \). If \( r_G \leq 2 \), DynLS implements the SimComLS algorithm until the time limit is reached, returning the best solution. Conversely, when \( r_G > 2 \), DynLS first operates the SAdpLS algorithm on graph \( G \) and subsequently on the \textit{local graph}. If this approach successfully improves the solution in the \textit{local graph}, DynLS repeats the SAdpLS on \( G \). If not, it resorts to the ComLS algorithm on \( G \) to further enhance the solution. Finally, when the time limit is reached, the best solution \( BS \) is returned.
}

\subsection{Initial solution construction}
We propose an algorithm named CIS to generate a high-quality initial solution. This algorithm constructs an IS by combining two methods: the reducing and tie-breaking framework (RTBF) \cite{gu2021towards} and a greedy-based strategy. Specifically, we use a parameter $r_G$ to determine which method to apply, where  $r_G= \min\{ \ell \mid \sum_{i=0}^\ell {(\overline{d}(G)})^i \geq \frac{1}{10} \times |V(G)| \}$. When $r_G>2$, the graph $G$ is relatively sparse, and the RTBF is employed to generate a solution. When $r_G\leq 2$, the graph $G$ is relatively dense, and a greedy strategy based on a scoring function $init_s(v) = \frac{\sigma_G(v)}{(d_{G}(v))^{0.5}}$ is used to generate a solution. This involves iteratively selecting a vertex with the highest score, adding it to the solution, and removing it and its neighbors from $V(G)$ until $V(G)$ is empty. For a detailed description of the construction process of the CIS, please see Algorithm S1 in Section S1 of the supplement file.

\begin{figure}[h]
\centerline{\includegraphics[width=0.8\linewidth]{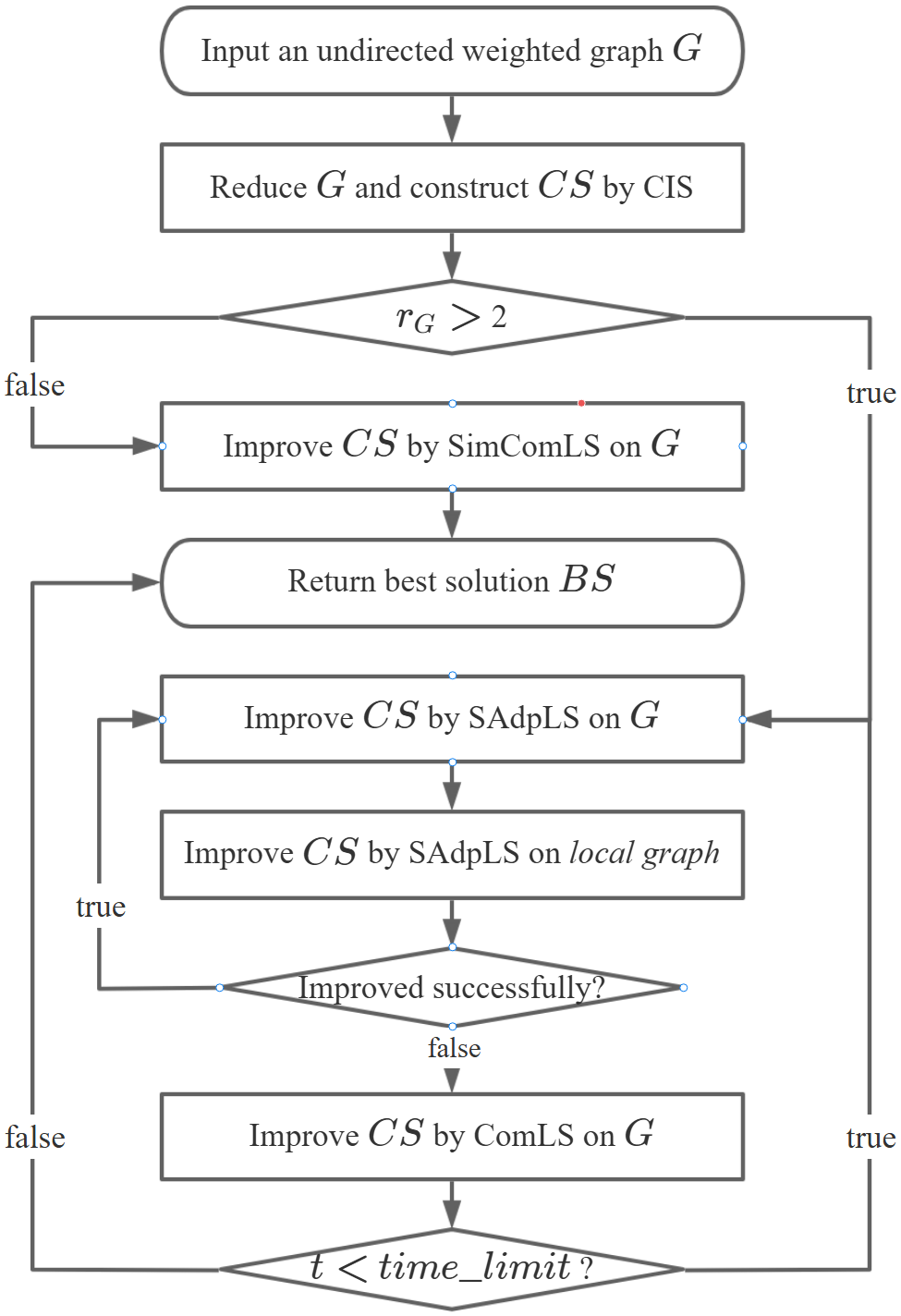}}
    \caption{The flowchart of the DynLS algorithm.}
    \label{fig1:flowchart}
\end{figure}

\subsection{Local search}
This section presents the local search process for DynLS, which is developed based on the iterated local search framework \cite{andrade2012fast}. Our local search algorithm encompass three enhanced approaches: A scores-based adaptive vertex perturbation strategy, a region location mechanism, and a variable neighborhood descent framework that integrates a series of vertex exchange strategies with a reward mechanism.

\subsubsection{Scores-based adaptive vertex perturbation}
In the ILS framework for MWIS, a commonly utilized perturbation strategy involves adding \( k \) random non-solution vertices to the current solution while removing their neighbors \cite{nogueira2018hybrid}. Here, \( k \) serves as a hyperparameter. However, relying on a constant value for \( k \), especially in sparse graphs, may restrict the method's adaptability to varying search scenarios. In addition, selecting vertices randomly in a global manner at each iteration can extend the time required for solution convergence. To overcome these challenges, we enhance the number of insertion vertices and the criteria for their selection. Our approach employs two variables: \( iter \), which tracks the total number of iterations, and \( uiter \), which counts the number of iterations without any improvement to the solution. The total number of insertion vertices, denoted as \( num \), consists of two components: a base number \( base\_num \) and a probabilistic component \( pro\_num \). Specifically, \( num = base\_num + pro\_num \), where \( pro\_num \) is defined as \( i + 1 \), with a probability that is proportional to \( \frac{1}{2^i} \) for \( i \geq 1 \). When \( uiter \) indicates that the solution has not improved for a prolonged period, we increase the base number to intensify the perturbation.

For a given vertex \( v \), the term ``visited'' is applied when the vertex is either added to or removed from the current solution \( CS \). The function \( age(v) \) represents the ``age'' of \( v \), indicating the number of iterations since its last state change. Each time the vertex is visited, \( age(v) \) is reset to 0 and then incremented with each following iteration. The ``age'' of vertices has been identified as an effective strategy for selecting high-quality vertices for MIS \cite{andrade2012fast}. However, its application can be problematic when the criteria for vertex selection are insufficient. We propose three new selection criteria to complement the ``age'' strategy, including  \( freq(v) \),  \( change(v) \), and \( loss(v) \). This integrated approach expands the range of options for vertex selection and helps mitigate conventional perturbation strategies' limitations. Specifically, \( freq(v) \) represents the frequency of visiting \( v \), which increases by one each time the vertex is visited. The function \( change(v) \) monitors the number of times the vertex \( v \) has been visited when \( CS \) has improved and reflects changes to the state of \( CS \). When \( v \) is added to \( CS \), the count for \( change(v) \) increases by one. Conversely, when \( v \) is removed from \( CS \), the count decreases by one. \( loss(v) \) quantifies the loss incurred when adding \( v \) to \( CS \) and removing its neighbors, defined as \( loss(v) = \sigma_{G}(N_G(v) \cap CS) - \sigma_{G}(v) \). A negative value for \( loss(v) \) signifies a gain in weight.

Initially, the values of \(freq(v)\), \(age(v)\), and \(change(v)\) for every vertex \(v\) are set to 0, while \(loss(v)\) is calculated based on \(CS\). The strategy pool comprises strategies derived from \(freq(v)\), \(age(v)\), \(change(v)\), and \(loss(v)\). Throughout each round of searching for solutions, a strategy is randomly selected from the current pool, denoted as \(op\), and the value of this selected strategy for each vertex is called the vertex's \(score\). The ``best'' vertex according to each of these four criteria is defined as follows:

\begin{itemize}
\item  For \(freq(v)\), a lower score is preferred. A smaller \(freq(v)\) indicates that vertex \(v\) has been visited infrequently, which promotes a more balanced exploration of the entire space and reduces repetitive searches in local areas.

\item In the case of \(age(v)\), a higher score is desirable. An increased \(age(v)\) for a non-solution vertex implies that it has been excluded from the solution for an extended duration. Including such vertices could effectively disrupt the current solution.

\item Regarding \(change(v)\), a larger score is favored. A higher \(change(v)\) suggests that vertex \(v\) has frequently been part of the solution. Incorporating the vertex outside the solution with the highest \(change\) may facilitate new enhancements when the current solution is stagnant.

\item  Finally, for \(loss(v)\), a smaller score is preferred. Non-solution vertices with weights exceeding the total weight of their solution neighbors should be considered for inclusion in the solution. 
\end{itemize}

The preceding discussion introduces a local search procedure named $\mathrm{SAdpLS}$, designed to enhance an existing solution. For a detailed overview of the pseudocode and a comprehensive explanation of the $\mathrm{SAdpLS}$ algorithm, please refer to Section S2 of the supplementary file.

\subsubsection{Region location mechanism}
When a local search procedure fails to enhance the current best solution within a designated area, broadening the search to other regions becomes essential. Exploring less frequently explored areas can yield new improvements and improve the overall quality of the solutions. For this, we introduce an adaptive region location strategy that enables targeted searches in rarely frequent regions, ensuring a more comprehensive exploration of the entire graph.

For a graph \( G \) and a set of reduction rules, the resulting graph, denoted as \( kG \), derived from applying these reduction rules to \( G \) is referred to as a \textit{kernel} of \( G \). In the following sections, we introduce an innovative concept called the \textit{local graph}.

\begin{definition}
Let $G$ be a graph and $CS$ an IS of $G$. For a vertex $v\in V(G)$ and an integer $r<dia(G)$, let 
$D= \cup_{i=1}^r N_G^i[v] \setminus \{u \mid u \in N_G^r(v) \land \exists w \in CS \land wu\in E(G) \land w \in N_G^{r+1}(v)\}$. We call the subgraph of $G$ induced by $D$ a \textit{$(v, r, CS)$-local graph associated with $v$}, where  $v$ is the \textit{center}  and $r$ is called the \textit{radius} of the local graph. 
\end{definition}

The local graph is a valuable tool for identifying the optimal region for a targeted search to discover the best solution. The primary objective is to identify a high-quality center. We select a vertex $v$ with the smallest $freq$ value as the center to achieve this. The rationale behind this approach is as follows: for a vertex $v$, a low $freq(v)$ suggests that $N_G^2(v)$ contains vertices with low $freq$ values, indicating that the local graph centered at $v$ may have been explored less frequently.

\subsubsection{Vertex exchange strategy}
The 2-improvement heuristic \cite{andrade2012fast} involves replacing one vertex $v$ in the current solution $CS$ with two vertices in $N_G(v)$. Therefore, 2-improvement considers only the first-level neighborhood of a vertex in $CS$. This limited search space may hinder solution improvement. We introduce a new vertex exchange strategy, called $(v;x,y)$-exchange, which incorporates second-level neighborhoods in the exchanging process and expands the scope of vertex exchange.

\begin{definition}
Let $CS$ be an independent set of a graph $G$. For each vertex $v\in CS$, a $(x, y)$-exchange related to $v$, denoted by $(v; x, y)$-exchange, refers to finding an independent set $S \subseteq N_G(v)$ such that $S$ contains exactly $x$ vertices with tightness one and $y$ vertices with tightness two, adding the vertices in $S$ to $CS$, and deleting the vertices in $N_G(S) \cap CS$.
\end{definition}

Given a vertex \( v \), a \( (v; x, y) \)-exchange may not exist if the neighborhood \( N_G(v) \) contains no subset \( S \) that satisfies the conditions specified in the definition. Moreover, even when a \( (v; x, y) \)-exchange is possible, we also refrain from executing the vertex exchange operation if adding \( S \) to the current solution \( CS \) results in a solution with a lower weight than that of \( CS \). In our DynLS algorithm, to enhance efficiency, we prioritize \( (x, y) \)-exchanges with smaller values for \( x \) and \( y \). This includes utilizing \( (x, 0) \)-exchanges for \( x \leq \Delta(G) \) and \( (x, y) \)-exchanges for any \( x \in \{1, 2, 3\} \) and \( y \in \{1, 2\} \).

To enhance search efficiency, we integrate \((x,y)\)-exchanges with \((\omega,1)\)-swap \cite{nogueira2018hybrid}, 2-improvement, and \((2, 3)\)-swap \cite{andrade2012fast}, resulting in a series of exchange modules. Utilizing the variable neighborhood descent (VND) approach \cite{brimberg2023variable}, we search for solutions with these modules. The exchange modules can be classified into three distinct types. The first type, referred to as \(EM\), comprises two neighborhood structures associated with a \((\omega,1)\)-swap and a \((x,y)\)-exchange. The \((\omega,1)\)-swap is recognized for its high execution efficiency, while the \((x,y)\)-exchange can significantly enhance solutions. The second type, denoted as \(A\), consists of three neighborhood structures: a \((\omega,1)\)-swap, a 2-improvement, and a \((1,1)\)-exchange. The third type, labeled as \(B\), includes two neighborhood structures corresponding to a \((2,3)\)-swap and a \((x,0)\)-exchange. The search process is described as follows. It starts by identifying improvements within the first neighborhood structure. If an improvement is achieved, the search proceeds within this structure. When no further improvements can be identified, the search shifts to the second neighborhood structure. The search returns to the first neighborhood structure if an improvement is found. This alternating process continues until no further improvements can be made in the second structure. The third neighborhood structure is treated similarly for the \(A\) type exchange module. Upon completing the search within each neighborhood structure, free vertices are added to the solution in descending order of their weights until the solution is maximized.

The $A$ type exchange module is straightforward and is employed in every iteration round. Following its application, we select an appropriate module from the set of exchange modules, denoted as $EM$, to substantially enhance the current solution. The $B$ type exchange module is utilized only in specific instances to improve the algorithm’s generalization capability. Given that different exchange modules in $EM$ involve varying numbers and types of vertices, specific models are better suited for particular scenarios. To determine the most suitable strategy from $EM$, we implement a reward mechanism represented as $Re$. Each exchange module $a \in EM$ is assigned an initial reward value of $Re(a) = 1$. During each iteration round, the algorithm selects an exchange module $a$ from $EM$ for the subsequent local search using the roulette wheel selection algorithm, based on the module selection function defined as:
\[
\pi(a) = \frac{Re(a)}{\sum_{a \in EM} Re(a)}
\]
After completing the vertex exchange process, we update the value of $Re(a)$ following the approach described in \cite{nogueira2018hybrid}, distinguishing between positive and negative rewards. Positive rewards motivate exchange modules with higher reward values, while negative rewards diminish the likelihood of using modules with lower reward values. By implementing this strategy, our algorithm can leverage different exchange modules at various stages of the search process, thereby enhancing overall efficiency. To facilitate periodic termination of the search process, we introduce an integer variable, \( sum\_Re \), which is based on the value of \( Re \). The initial value of \( sum\_Re \) is set to the total number of exchange modules in \( EM \). If a module \( a \in EM \) results in an increase of \( Re(a) \) by 1, 2, or 3, then \( sum\_Re \) is correspondingly incremented by 1, 2, or 3. Conversely, if no such module is identified, \( sum\_Re \) is decreased by 1. The search process concludes when \( sum\_Re \) is reduced to its initial value.

Utilizing the exchange modules and the reward mechanism, a complex local search algorithm for the MWIS problem is developed and designated as ComLS. The pseudo-code and a comprehensive algorithm description are presented in Section S3 of the supplement file.

\vspace{-0.2cm}
\subsection{The DynLS algorithm}
This section offers a comprehensive description of our DynLS algorithm. As depicted in Algorithm \ref{alg-main}, the algorithm takes a graph \(G\) and a time limit \(time\_limit\) as input (line 1) and produces the best solution \(BS\) (line 2). The algorithm begins by reducing the graph \(G\) into a kernel (also referred to as \(G\)) through the application of reduction rules outlined in \cite{gellner2021boosting} (line 3). It then constructs an initial solution using the CIS algorithm (line 4). During the initialization phase, the algorithm calculates an initial radius \(r_G\) necessary for constructing local graphs (line 5). If \(r_G \leq 2\), this indicates that the graph \(G\) is relatively dense. The SimComLS algorithm enhances the current solution \(CS\) (lines 6-7) in such instances. Conversely, if \(r_G > 2\), the algorithm enters an iterative process that involves three key stages before reaching the time limit. First, it accelerates convergence speed using SAdpLS on global graphs (line 10). Next, it enhances the capability to escape from locally optimal solutions by focusing on low-frequency areas, i.e., the local graphs (lines 11-25). Finally, it refines the solution quality using ComLS on global graphs (lines 26-28).

The algorithm begins each iteration by enhancing the current solution using SAdpLS (line 10). Suppose SAdpLS becomes trapped in a local optimum, indicating difficulty in improving the solution on the global graph. In that case, the algorithm shifts its focus to local graphs to escape this local optimum and further refine the solution (lines 12-25). Our approach is exclusively centered on local graphs derived from the set \(CS\), resulting in a total of \(|CS|\) local graphs for consideration.
We partition these local graphs into 100 segments to enhance search efficiency, allowing the algorithm to select one segment for local search at a time. Before executing the local search on these segments, the algorithm calculates the number \(\ell\) for each segment of local graphs as \(0.01 \times |CS|\). It also initializes several parameters, including a counter \(c\), a boolean variable \(flag\), a set \(list\), and an array \(improve_{false}\) with a length of \(|V|\) (line 11). Following this, the algorithm enters a loop to conduct the local search on the local graphs (lines 12-25).

The process begins by selecting a vertex \( v \) with the smallest \( freq(v) \)  from $list$. This selection results in updates to the list and a variable \( c \) (line 13). A \((v, r', CS)\)-local graph \( LG \) is then constructed with center \( v \) and radius \( r' = r_G + improve_{false}(v) \), where \( improve_{false}(v) \) tracks the number of times the current best solution has not been enhanced when exploring local graphs centered at \( v \) (line 14). Let \( solu_1 \) denote the intersection of the vertex set of \( LG \) and the current solution \( CS \). An initial solution \( solu_2 \) for \( LG \) is generated using a greedy approach based on vertex weights (line 15). The SAdpLS algorithm is subsequently applied to enhance \( solu_2 \). If \( solu_2 \) surpasses \( solu_1 \) in quality, the algorithm updates the current best solution \( BS \) as well as the current solution \( CS \) and sets \( flag \) to one, indicating that an improvement in the best solution has occurred during this stage (line 18). Conversely, if \( solu_2 \) does not exceed \( solu_1 \), the value of \( improve_{false}(v) \) is incremented by one to broaden the search area in the next round (line 20). If the current best solution \( BS \) fails to improve after searching \( \ell \) local graphs, the algorithm increases \( \ell \) to expand the search region (line 23). In situations where no improvements are found during the local search on local graphs (i.e., \( flag = 0 \)), the algorithm implements the complex local search procedure and updates the current solution \( CS \) (line 27).

\begin{algorithm}[h]
    \caption{DynLS algorithm}
    \scriptsize
    \label{alg-main}
    \begin{algorithmic}[1]
    \STATE \text{\textbf{Input:}} A $G = (V,E,\sigma)$, $time\_limit$
    \STATE \text{\textbf{Output:}} An independent set $BS$
    \STATE $G$ $\xleftarrow{}$ reduce $G$ by applying reduction rules;
    \STATE $r_G \xleftarrow{}  \min\{ \ell \mid \sum_{i=0}^\ell {(\overline{d}(G)})^i \geq \frac{1}{10} \times |V(G)| \}$;
    \STATE $CS$ $\xleftarrow{}$  CIS($G$, $r_G$), $BS$ $\xleftarrow{}$  $CS$;
    \IF{$r_G \leq 2$}
        \STATE $BS$ $\xleftarrow{}$ \  SimComLS($G$, $CS$, $BS$, $time\_limit$), $CS$ $\xleftarrow{}$ \  $BS$;
    \ELSE
        \WHILE {$t \leq time\_limit$}
            \STATE $BS$ $\xleftarrow{}$ \ SAdpLS($G$, $CS$, $BS$, $-1$), $CS \xleftarrow{} \  BS$;
            \STATE $\ell$ $\xleftarrow{}$ \  $\frac{|CS|}{100}$, $c$ $\xleftarrow{}$ \  0, $flag$ $\xleftarrow{}$ \  0, $list \xleftarrow{} \ CS$, $improve_{false}(v) = 0$ for $v \in V(G)$;
            \WHILE {($c \leq min\{|CS|,  \ell\}$) $\land$ $list\neq \emptyset$}
                \STATE $v$ $\xleftarrow{}$ \  a vertex with smallest $freq(v)$ from $list$, $list \xleftarrow{} \  list \setminus \{v\}$, $c$ $\xleftarrow{}$ \ $c$ + 1;
                \STATE  $LG$ $\xleftarrow{}$ a $(v, r', CS)$-\textit{local graph} with center $v$ and radius $r'= r_G + improve_{false}(v)$;
                \STATE $solu_1$ $\xleftarrow{}$ \  \{$v \mid v \in V(LG) \cap CS$\}, $solu_2 \leftarrow$ \  an initial solution of $LG$ obtained by greedy method;
                \STATE $solu_2$ $\xleftarrow{}$ \ SAdpLS($LG$, $solu_2$, $solu_1$,$searchDepth$);
                \IF {$\sigma_G(solu_2) > \sigma_G(solu_1)$} 
                    \STATE $BS \xleftarrow{} \  (BS \setminus solu_1) \cup solu_2$, $CS \xleftarrow{} \  (CS \setminus solu_1) \cup solu_2$, $flag $ $\xleftarrow{}$ \  1;
                \ELSE
                    \STATE $improve_{false}(v)$ $\xleftarrow{}$ \  $improve_{false}(v)$ + 1;
                \ENDIF
                \IF {($flag$ = 0)  $\land$ ($c$ = $l$)}
                    \STATE $\ell$ $\xleftarrow{}$ \ $\ell + \frac{|CS|}{100}$;
                \ENDIF
            \ENDWHILE
            \IF {$flag$ = 0}
                \STATE $BS \xleftarrow{} \ $ ComLS($G,CS,BS$), $CS \xleftarrow{} \  BS$;
            \ENDIF
        \ENDWHILE
    \ENDIF
     \RETURN $BS$;
    \end{algorithmic}
\end{algorithm}

\begin{theorem} \label{timecom}
For a graph with $n$ vertices and $m$ edges, the time complexity for the DynLS algorithm in one iteration is \(O(n \times \max\{n \log n, mn, \Delta^2\} \), where \(\Delta\) is the maximum degree of the graph.
\end{theorem}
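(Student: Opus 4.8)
The plan is to interpret ``one iteration'' as a single pass through the body of the outer \textbf{while} loop (lines 9--28 of Algorithm~\ref{alg-main}) and to charge its cost to three pieces: the call $\mathrm{SAdpLS}(G,CS,BS,-1)$ on the global graph (line 10), the local-graph loop (lines 11--25), and the conditional call to $\mathrm{ComLS}$ (line 27). First I would fix the cost of the primitives that recur inside every subroutine, so that they can be reused throughout. Computing $loss(v)=\sigma_G(N_G(v)\cap CS)-\sigma_G(v)$ for all vertices is a single scan over incidences and costs $O(m)$, because $\sum_{v} d_G(v)=2m$; scanning and selecting the best vertex under any one of the criteria $freq$, $age$, $change$, $loss$ costs $O(n)$; and the maximization step that appends free vertices in decreasing weight order costs $O(n\log n)$ due to the sort. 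This already accounts for the $n\log n$ term.

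Next I would bound the vertex-exchange machinery that produces the $\Delta^2$ and $mn$ terms. For a fixed center $v\in CS$, a $(v;x,y)$-exchange must compute the tightness of the vertices of $N_G(v)$ and then test admissible combinations that reach into the second-level neighborhood; since the restricted ranges $x\in\{1,2,3\}$, $y\in\{1,2\}$ limit the search to pairs of neighbors, this costs $O(\Delta^2)$ per center, which gives the $\Delta^2$ term. A complete local-search sweep, which recomputes tightness and scans the exchanges over the solution, then costs $O(m\Delta)$ in the worst case; bounding $\Delta\le n$ yields the $mn$ term. I would therefore argue that one invocation of $\mathrm{SAdpLS}$, and likewise one neighborhood sweep of $\mathrm{ComLS}$, runs in $O(\max\{n\log n, mn, \Delta^2\})$.

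The outer factor of $n$ then comes from the local-graph loop. Since $list$ is initialized to $CS$ and loses exactly one vertex per pass, the loop performs at most $|CS|\le n$ passes. Each pass selects the minimum-$freq$ vertex ($O(n)$), builds a $(v,r',CS)$-local graph by a bounded-radius traversal whose edge checks for the defining set $D$ cost $O(m)$, forms a greedy initial solution in $O(n\log n)$, and runs $\mathrm{SAdpLS}$ on that local graph, whose vertex and edge counts are bounded by those of $G$. Hence each pass is $O(\max\{n\log n, mn, \Delta^2\})$ and the loop is $O(n\cdot\max\{n\log n, mn, \Delta^2\})$. Since the single line-10 call to $\mathrm{SAdpLS}$ and the line-27 call to $\mathrm{ComLS}$ are each $O(\max\{n\log n, mn, \Delta^2\})$ (or, if they perform $O(n)$ internal improvement rounds, $O(n\cdot\max\{n\log n, mn, \Delta^2\})$), summing the three pieces leaves the loop's bound as the dominant term, which is exactly the claimed $O(n\cdot\max\{n\log n, mn, \Delta^2\})$.

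The hard part will be making the per-invocation cost of $\mathrm{SAdpLS}$ and $\mathrm{ComLS}$ rigorous, since each contains its own inner loops: the perturbation that inserts $num=base\_num+pro\_num$ vertices, the VND alternation across the neighborhood structures, and the reward-controlled stopping rule driven by $sum\_Re$. I would need to show that these contribute only a constant (or at most $O(n)$) number of improvement rounds per invocation, so that no extra multiplicative factor appears. The two remaining delicate points are verifying that the $(v;x,y)$-exchange genuinely stays within $O(\Delta^2)$ rather than enumerating larger tuples of neighbors, and that the local-graph construction never exceeds the global parameters $m$ and $n$; both follow from the bounded ranges of $x,y$ and from the local graph being an induced subgraph of $G$, but they deserve explicit justification.
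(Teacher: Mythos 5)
Your proposal follows essentially the same route as the paper's proof: the same decomposition into the global $\mathrm{SAdpLS}$ call, the at-most-$|CS|\le n$ passes of the local-graph loop (each pass charged for local-graph construction, an $O(n\log n)$ greedy sort, and one $\mathrm{SAdpLS}$ run), and the final $\mathrm{ComLS}$ call, with the loop supplying the outer factor of $n$ and $\mathrm{ComLS}$ supplying the $\Delta^2$ term. The one substantive difference is that the paper does not derive the subroutine costs at all --- it simply cites its supplementary file for $\mathrm{SAdpLS}$ being $O(mn)$ and for $\mathrm{ComLS}/\mathrm{SimComLS}$ being $\max\{O(mn), O(n\Delta^2)\}$ --- so the ``hard part'' you flag (bounding the perturbation rounds, the VND alternation, and the reward-controlled stopping inside each invocation) is exactly what the paper outsources rather than proves; your first-principles sketch of those bounds is the weakest link, in particular the claim that a $(v;x,y)$-exchange costs only $O(\Delta^2)$ per center, since with $x\in\{1,2,3\}$ and $y\in\{1,2\}$ the candidate subsets $S\subseteq N_G(v)$ can contain up to five vertices and a naive enumeration would not stay within $O(\Delta^2)$ without an argument about how candidates are filtered.
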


\begin{proof}
The reduction rules and initial solution generation in the DynLS algorithm are executed only once, so their running times are not included in the analysis of a single iteration.

When \( r_G \leq 2 \), the algorithm utilizes the \( \text{SimComLS} \) algorithm, which runs in \( \max\{O(mn), O(n\Delta^2)\} \) (see Section S3 of the supplementary file). The algorithm enters a loop when \( r_G > 2 \). Each iteration begins with the \( \text{SAdpLS} \) algorithm, requiring \( O(mn) \) time (see Section S2 of the supplementary file). Updating parameters and performing comparisons take \( O(1) \) time. The algorithm then constructs the local graph \( LG \), marking and enumerating its vertices, which takes \( O(|V(LG)|) \) time, where \( |V(LG)| \leq n \). An initial solution for \( LG \) is obtained through a greedy approach that involves sorting the vertices, leading to a time complexity of \( O(|V(LG)| \log |V(LG)|) \). Given that \( |CS| < n \), the total time complexity for the loop is at most \( O(n \times \max \{n \log n, mn\}) \). Finally, the \( \text{ComLS} \) algorithm is executed with a time complexity of \( \max\{O(mn), O(n\Delta^2)\} \) (see Section S3 of the supplementary file). Therefore, the overall time complexity for a single iteration of DynLS is \( O(n \times \max\{n \log n, mn, \Delta^2\}) \). 
\end{proof}

\section{Experiments}\label{sec-5}

\subsection{Experimental setup} \label{sec5-1}

\subsubsection{Parameters}
All experiments were conducted on CentOS Linux release 7.6.1810, utilizing an Intel(R) Xeon(R) Gold 6254 processor with 128 GB of RAM. The code, developed in C++, was compiled with GCC using the -O3 optimization option. The algorithm (requiring seed values) was executed five times for each instance, employing seeds 1 through 5, with a cutoff time set at 1000 seconds. We recorded the best solution \(max_w\) and the average solution \(avg_w\) across the five runs for each instance and algorithm. 
An “N/A” indicates a feasible solution could not be obtained within 1200 seconds. We implemented the reduction rules developed by Alexander Gellner et al. \cite{gellner2021boosting} for kernelization purposes. The reduction process was limited to 200 seconds, and we utilized the \(C_{\text{fast}}\) reduction configuration \cite{gellner2021boosting}. The parameters \(M_1\) in SAdpLS (Algorithm S2) and \(searchDepth\) in DynLS (Algorithm \ref{alg-main}) were both set to 100, while \(M_2\) in SAdpLS was fixed at 3000. The parameter $t$ of BMS is set to 50 as previous research \cite{zhang2023tivc}. The key parameters \(M_1\) and \(M_2\)  will be fine-tuned as detailed in Section \ref{sec5-2}.

\subsubsection{Algorithms and datasets}
We compare DynLS with five state-of-the-art algorithms that are specifically designed to address the MWIS problem: the HILS algorithm \cite{nogueira2018hybrid}, the HtWIS algorithm \cite{gu2021towards}, the Solve algorithm \cite{xiao2021efficient}, the Cyclic-Fast algorithm \cite{gellner2021boosting}, and the m$^2$wis + s algorithm \cite{grossmann2023finding}. 
{
HILS employs a local search framework that explores two neighborhood structures, the $(\omega, 1)$-swap and the $(1, 2)$-swap. It incorporates VND to the above two neighborhoods, resulting in a strong performance across various graph types.   
HtWIS is a fast approach that utilizes straightforward reduction rules degree-one, degree-two, and single-edge reduction rules and pruning techniques to approximate the MWIS.  
Solve features a robust array of reduction rules and functions as a standard branch-and-reduce exact algorithm.    
Cyclic-Fast adopts a branch-and-reduce strategy, applying struction methods to balance speed and solution quality. 
m$^2$wis + s is an evolutionary technique that combines diverse reduction rules and graph partitioning-based crossover methods with local search strategies, yielding commendable results for many large instances. However, as the code for the  m$^2$wis + s algorithm is not publicly available, we will only compare our results with those reported in their publication \cite{grossmann2023finding}.
}

The dataset comprises nine notable benchmarks: Dimacs \footnote{https://networkrepository.com/}, Facebook \footnote{https://networkrepository.com/}, Sparse \footnote{https:// sparse.tamu.edu}, Network \footnote{https://networkrepository.com/}, Fe, Mesh, Osm \footnote{https://www.openstreetmap.org}, Snap \footnote{https://snap.stanford.edu/data/}, and Ssmc \footnote{https:// sparse.tamu.edu}. Together, these benchmarks contain a total of 360 instances. 
Based on their methods of weight assignment, the datasets can be classified into two categories: Family A and Family B. 
Family A encompasses the Dimacs, Facebook, Sparse, and Network datasets. Following the method described in \cite{gu2021towards}, these datasets assign a weight to each vertex \(v\) using the formula \((id(v) - 1) \% 200 + 1\), where \(id(v)\) indicates the identifier of the vertex. 
{
(1) Dimacs dataset is renowned for investigating graph theory and combinatorial optimization problems. It comprises 70 synthetic and real-world instances with vertex counts ranging from 776 to 50912018 \cite{partitioning58810th}. (2) Facebook dataset, featuring 109 instances, is focused on social network analysis, primarily derived from the social connections among users on the Facebook platform. The vertex counts in this dataset range from 962 to 3097165 \cite{rossi2015network}. (3) Sparse dataset consists of sparse graphs widely employed in graph algorithms and machine learning, containing 45 instances with 1000 to 710145 vertices. (4) Network dataset encompasses graph data from various fields, including road network analysis, scientific research, social networks, technology, and the internet. It comprises 43 instances, with vertex counts ranging from 3031 to 23947347 \cite{rossi2015network}.
}
All datasets in Family A are available for online download.

Family B comprises the remaining five datasets, which utilize a random and uniform distribution to assign vertex weights between 1 and 200, as outlined in \cite{grossmann2023finding}. 
{
(1) Fe Dataset consists of seven 3D meshes with vertex counts ranging from 11143 to 143437. These meshes are generated using the finite element method \cite{soper2004combined}. Researchers frequently utilize these datasets to simulate and analyze physical phenomena across various fields, including engineering, physics, medicine, and fluid dynamics. (2) Mesh Dataset encompasses 15 dual graphs, with vertex counts between 4419 and 1087716, derived from well-known triangle meshes \cite{sander2008efficient}. Triangle meshes are foundational structures in computer graphics, computational geometry, and mesh processing, as they effectively represent the surfaces of three-dimensional objects. (3) Osm Dataset includes real-world graph data sourced from the OpenStreetMap (OSM) platform \cite{ barth2016temporal}. These graphs depict connections between cities, roads, and buildings, making them invaluable for geographic information systems, traffic analysis, route planning, and map visualization applications. It features 34 instances with vertex counts ranging from 1018 to 46221. (4) Snap Dataset consists of large social network datasets, containing 31 instances with vertex counts varying from 5242 to 4847571 \cite{jure2014snap}. It facilitates research in network science, social network analysis, graph theory, and related disciplines. (5) Ssmc Dataset incorporates data from diverse domains, including social networks, image processing, and geographic information systems \cite{kolodziej2019suitesparse}. It contains six instances with vertex counts ranging from 25181 to 710145.}
We thank Professor Darren Strash for his generous provision of the datasets in Family B.


\subsection{Tuning of parameter $M_1$ and parameter $M_2$} \label{sec5-2}

The parameters \( M_1 \) and \( M_2 \) in SAdpLS (Algorithm S2) are essential to the local search phase and greatly influence the efficiency of the region location mechanism.  We explored various combinations for \( M_1 \) from \{50, 100, 150, 200\} and for \( M_2 \) from \{2000, 3000, 4000\}, resulting in a total of 12 unique combinations. Each combination was tested using seeds 1, 2, 3, 4, and 5 across ten graphs of different sizes (see Table S1 of the supplementary file Section S4 for detailed information). Each run was conducted over 1000 seconds to identify the optimal parameter combination. The results include the best and average solutions for each graph across the five runs. Note that we assessed the algorithm’s performance without employing reduction rules, given that the tested parameters pertain solely to the local search phase, and the reduced graph would be too small to showcase any differences effectively.

The results are presented in Table S2 of the supplementary file in Section S4. 
{From the results, it can be observed that, when evaluating the \( max_w \) values, the combinations 100-3000, 150-4000, 200-3000, and 200-4000 achieved the highest values, each delivering the best solution for five instances. In terms of average solution quality, the 100-3000 combination secured the best solution for four instances, while the 150-4000, 200-3000 and 200-4000 combinations yielded the best solutions for zero, two and three instances, respectively. Thus, the 100-3000 combination emerges as the optimal choice. Moreover, \( M_1 \) denotes the depth of the \textit{local graph} search, and it is essential for the algorithm to adapt flexibly to various search scenarios. Consequently, a setting of 100 is deemed more suitable than 200. Therefore, the final parameter selection is 100-3000, with \(M_1\) set to 100 and \(M_2\) set to 3000.


}

\subsection{Experimental results} \label{sec5-3}
Due to space limitations, we have only included the experimental data for the Network, Fe, Snap, and Ssmc datasets in the main text; see Tables \ref{Experimental-results-on-network}-\ref{Experimental-results-on-ssmc}. Experimental results for the Dimacs, Facebook, Sparse, Mesh, and Osm datasets can be found in Tables S3-S7 in Section S5 of the supplementary file. Note that the m$^2$wis + s algorithm is unavailable; thus, it was compared only with the instances presented in its original paper \cite{grossmann2023finding}, specifically within Family B. Since the HtWIS and Solve algorithms do not require seed values, they were executed only once. The results of m$^2$wis + s, HtWIS, and Solve are represented as ``$value_w$''. We analyze the instances in which each algorithm achieves the maximum $max_w$, $value_w$, or $avg_w$ compared to the others. In each instance, the maximum value of $max_w$ or $value_w$ among all compared algorithms is presented in bold. DynLS, Cyclic-Fast, and HILS algorithms are evaluated based on their $max_w$ values, while Solve, HtWIS, and m$^2$wis + s are assessed according to their $value_w$. In addition, the average solution $avg_w$ for DynLS, Cyclic-Fast, and HILS is analyzed independently, with the maximum $avg_w$ among these three algorithms also highlighted in bold for each instance.

\subsubsection{Comparison of solution quality for Family A}
In this family of datasets, the DynLS algorithm was compared against four algorithms: HILS, HtWIS, Solve, and Cyclic-Fast. Across the datasets, DynLS consistently demonstrated significant outperformance compared to other algorithms. Especially, DynLS achieved the highest \(max_w\) and \(avg_w\) values for all instances in the Network benchmark, as shown in Table \ref{Experimental-results-on-network}. Next, we detail the performance results of these algorithms on the other datasets.

For the Dimacs dataset, DynLS emerged as the leading performer in identifying the highest \(max_w\) and \(avg_w\) values, as illustrated in Table S3 in Section S5 of the supplementary file. Among the 70 instances analyzed, DynLS achieved the highest \(max_w\) values for 68 instances, significantly surpassing the performance of HILS, HtWIS, Solve, and Cyclic-Fast, which attained the maximum \(max_w\) (or $value_w$) for only ten, four, 18, and 29 instances, respectively. DynLS was unable to find the highest \(max_w\) values for only two instances ``adaptive'' and ``fe-ocean.'' However, HILS, Solve, and Cyclic-Fast encountered difficulties in identifying feasible solutions for numerous instances, particularly large-scale instances like ``hugetrace-xxxxx'', ``hugetric-xxxxx'', and ``rgg\_n\_2\_2x\_s0.'' Moreover, DynLS continued to excel by securing the highest \(avg_w\) values for 69 instances, significantly outpacing HILS and Cyclic-Fast, which found the highest \(avg_w\) values on just eight and 25 instances, respectively.

Regarding the Facebook dataset, DynLS exhibited a notable advantage over its competitors, as illustrated in Table S4 of the supplementary file Section S5. Among the 109 instances analyzed, DynLS secured the highest \(max_w\) and \(avg_w\) values in 107 instances. In contrast, the second-best performer, HILS, achieved the highest \(max_w\) values in 11 instances and the highest \(avg_w\) values in four instances. Furthermore, the algorithms HtWIS, Solve, and Cyclic-Fast reached the highest \(max_w\) (or $value_w$) values in two, two, and four instances, while Cyclic-Fast achieved the highest \(avg_w\) values in only two instances.

\begin{table*}[!h]
\centering
\caption{Experimental results on Network.}
\resizebox{1.0\linewidth}{!}{
\begin{tabular}{l|llllllll} 
 \hline
\multirow{2}{*}{\textbf{Graph}} &
\multicolumn{2}{c}{\textbf{DynLS}} &
\multicolumn{2}{c}{\textbf{Cyclic-Fast}} &
\multicolumn{2}{c}{\textbf{HILS}} &
\multicolumn{1}{c}{\textbf{Solve}} &
\multicolumn{1}{c}{\textbf{HtWIS}} \\

\cmidrule(r){2-9} & \multicolumn{1}{c}{\textbf{$max_w$}} & \multicolumn{1}{c}{\textbf{$avg_w$}} & \multicolumn{1}{c}{\textbf{$max_w$}} & \multicolumn{1}{c}{\textbf{$avg_w$}} & \multicolumn{1}{c}{\textbf{$max_w$}} & \multicolumn{1}{c}{\textbf{$avg_w$}} & \multicolumn{1}{c}{\textbf{$value_w$}} & \multicolumn{1}{c}{\textbf{$value_w$}} \\
\hline
rec-amazon & \textbf{5108939} & \textbf{5108939} & \textbf{5108939} & \textbf{5108939} & 5101076 & 5088857 & \textbf{5108939} & 5108886 \\[-1pt]
retweet-crawl & \textbf{113078098} & \textbf{113078098} & \textbf{113078098} & \textbf{113078098} & N/A & N/A & \textbf{113078098} & \textbf{113078098} \\[-1pt]
road-central & \textbf{830232458} & \textbf{830232458} & 830232183 & 830228951.80 & N/A & N/A & 830055255 & 830012442 \\[-1pt]
roadNet-CA & \textbf{101899948} & \textbf{101899371.90} & 101731370 & 101727657.40 & N/A & N/A & 100887364 & 101577452 \\[-1pt]
roadNet-PA & \textbf{57105569} & \textbf{57105436.20} & 57052752 & 57030798.40 & 49795448 & 49727618.20 & 56632742 & 56929109 \\[-1pt]
road-usa & \textbf{1381590406} & \textbf{1381589870} & 1381530298 & 1381521103.60 & N/A & N/A & 1380608075 & 1380851928 \\[-1pt]
rt-retweet-crawl & \textbf{103909643} & \textbf{103909643} & \textbf{103909643} & \textbf{103909643} & 103168896 & 102937712.80 & \textbf{103909643} & \textbf{103909643} \\[-1pt]
sc-ldoor & \textbf{10302330} & \textbf{10302330} & \textbf{10302330} & \textbf{10302330} & 9489602 & 9343420.80 & \textbf{10302330} & 10296910 \\[-1pt]
sc-msdoor & \textbf{3912203} & \textbf{3912203} & \textbf{3912203} & \textbf{3912203} & 3791244 & 3749901.60 & 3902981 & 3906819 \\[-1pt]
sc-nasasrb & \textbf{441645} & \textbf{441634.50} & 436859 & 427569.80 & 435459 & 434886 & 410028 & 434430 \\[-1pt]
sc-pkustk11 & \textbf{523520} & \textbf{523520} & \textbf{523520} & \textbf{523520} & 519122 & 518636.60 & 517690 & 521540 \\[-1pt]
sc-pkustk13 & \textbf{730705} & \textbf{730705} & \textbf{730705} & \textbf{730705} & 723091 & 721704 & 690196 & 721338 \\[-1pt]
sc-pwtk & \textbf{1199520} & \textbf{1199228.50} & 1188112 & 1185867.60 & 1147405 & 1135962.20 & 1147964 & 1173203 \\[-1pt]
sc-shipsec1 & \textbf{2878129} & \textbf{2877664.50} & 2641278 & 2626805 & 2777110 & 2733567 & 2552725 & 2775670 \\[-1pt]
sc-shipsec5 & \textbf{3778449} & \textbf{3777205.70} & 3474214 & 3450373.40 & 3599490 & 3524863 & 3255839 & 3673203 \\[-1pt]
soc-BlogCatalog & \textbf{6997002} & \textbf{6997002} & \textbf{6997002} & \textbf{6997002} & 6993717 & 6990306.40 & \textbf{6997002} & \textbf{6997002} \\[-1pt]
soc-brightkite & \textbf{3821557} & \textbf{3821557} & \textbf{3821557} & \textbf{3821557} & 3820470 & 3820014.60 & \textbf{3821557} & \textbf{3821557} \\[-1pt]
soc-buzznet & \textbf{7323353} & \textbf{7323353} & 7323273 & 7323257.20 & 7311286 & 7309515.40 & \textbf{7323353} & 7322761 \\[-1pt]
soc-delicious & \textbf{45946372} & \textbf{45946372} & \textbf{45946372} & \textbf{45946372} & 45373671 & 45210481.40 & \textbf{45946372} & 45941169 \\[-1pt]
soc-digg & \textbf{67761717} & \textbf{67761717} & \textbf{67761717} & \textbf{67761717} & 66938402 & 66770321.60 & \textbf{67761717} & 67761681 \\[-1pt]
soc-epinions & \textbf{1814132} & \textbf{1814132} & \textbf{1814132} & \textbf{1814132} & 1813973 & 1813798 & \textbf{1814132} & \textbf{1814132} \\[-1pt]
soc-flickr & \textbf{37783767} & \textbf{37783761.40} & 37783720 & 37783664.40 & 36792167 & 36773484.40 & \textbf{37783767} & 37783282 \\[-1pt]
soc-flixster & \textbf{244070588} & \textbf{244070588} & \textbf{244070588} & \textbf{244070588} & 242693692 & 242625212.40 & \textbf{244070588} & \textbf{244070588} \\[-1pt]
soc-FourSquare & \textbf{55452016} & \textbf{55452016} & \textbf{55452016} & \textbf{55452016} & 55339222 & 55273988.40 & \textbf{55452016} & \textbf{55452016} \\[-1pt]
soc-gowalla & \textbf{12202367} & \textbf{12202367} & \textbf{12202367} & \textbf{12202367} & 12046758 & 12041513.80 & \textbf{12202367} & 12202004 \\[-1pt]
soc-lastfm & \textbf{111998876} & \textbf{111998876} & \textbf{111998876} & \textbf{111998876} & 111364538 & 111340631.40 & \textbf{111998876} & \textbf{111998876} \\[-1pt]
soc-LiveMocha & \textbf{6430396} & \textbf{6430396} & \textbf{6430396} & \textbf{6430396} & 6405796 & 6405269 & \textbf{6430396} & 6430394 \\[-1pt]
soc-slashdot & \textbf{5077946} & \textbf{5077946} & \textbf{5077946} & \textbf{5077946} & 5076917 & 5076745.20 & \textbf{5077946} & \textbf{5077946} \\[-1pt]
soc-youtube & \textbf{37007102} & \textbf{37007102} & \textbf{37007102} & \textbf{37007102} & 35946856 & 35934347 & \textbf{37007102} & 37007097 \\[-1pt]
soc-youtube-snap & \textbf{90174634} & \textbf{90174634} & \textbf{90174634} & \textbf{90174634} & 86608040 & 86560424.40 & \textbf{90174634} & 90174629 \\[-1pt]
tech-as-skitter & \textbf{124032626} & \textbf{124032626} & 124028142 & 124028046.20 & N/A & N/A & 124028079 & 124010835 \\[-1pt]
tech-p2p-gnutella & \textbf{4754823} & \textbf{4754823} & \textbf{4754823} & \textbf{4754823} & 4754716 & 4754622 & \textbf{4754823} & \textbf{4754823} \\[-1pt]
tech-RL-caida & \textbf{12299662} & \textbf{12299662} & \textbf{12299662} & \textbf{12299662} & 12158691 & 12093637 & \textbf{12299662} & 12294041 \\[-1pt]
web-arabic2005 & \textbf{5073754} & \textbf{5073754} & \textbf{5073754} & \textbf{5073754} & 5056166 & 5043940.20 & \textbf{5073754} & \textbf{5073754} \\[-1pt]
web-BerkStan & \textbf{723837} & \textbf{723837} & \textbf{723837} & \textbf{723837} & \textbf{723837} & \textbf{723837} & \textbf{723837} & 723793 \\[-1pt]
web-edu & \textbf{162434} & \textbf{162434} & \textbf{162434} & \textbf{162434} & \textbf{162434} & 162432 & \textbf{162434} & \textbf{162434} \\[-1pt]
web-indochina-2004 & \textbf{415965} & \textbf{415965} & \textbf{415965} & \textbf{415965} & 415220 & 415031 & \textbf{415965} & \textbf{415965} \\[-1pt]
web-it2004 & \textbf{10298594} & \textbf{10298594} & \textbf{10298594} & \textbf{10298594} & 10040966 & 10004648.20 & \textbf{10298594} & 10296599 \\[-1pt]
web-sk2005 & \textbf{6746271} & \textbf{6746271} & 6746151 & 6746014.40 & 6736740 & 6725276.60 & 6746168 & 6743987 \\[-1pt]
web-spam & \textbf{267817} & \textbf{267817} & \textbf{267817} & \textbf{267817} & 267806 & 267805 & \textbf{267817} & 267809 \\[-1pt]
web-uk2005 & \textbf{265384} & \textbf{265384} & \textbf{265384} & \textbf{265384} & N/A & N/A & \textbf{265384} & \textbf{265384} \\[-1pt]
web-webbase2001 & \textbf{1355649} & \textbf{1355649} & \textbf{1355649} & \textbf{1355649} & \textbf{1355649} & 1355643.80 & \textbf{1355649} & 1355636 \\[-1pt]
web-wikipedia2009 & \textbf{130289852} & \textbf{130289834.50} & 130276367 & 130275393.20 & N/A & N/A & 130273301 & 130285876 \\

\hline
total & \textbf{43} & \textbf{43} & 30 & 30 & 3 & 1 & 29 & 14 \\
\hline
\end{tabular}}

\label{Experimental-results-on-network}
\end{table*}


\begin{table*}[!h]
\centering
\caption{Experimental results on Fe.}
\resizebox{1.0\linewidth}{!}{
\begin{tabular}{l|lllllllll} 
\hline
\multirow{2}{*}{\textbf{Graph}} &
\multicolumn{2}{c}{\textbf{DynLS}} &
\multicolumn{2}{c}{\textbf{Cyclic-Fast}} &
\multicolumn{2}{c}{\textbf{HILS}} &
\multicolumn{1}{c}{\textbf{m$^2$wis + s}} &
\multicolumn{1}{c}{\textbf{Solve}} &
\multicolumn{1}{c}{\textbf{HtWIS}}
 \\   

\cmidrule(r){2-10} & \multicolumn{1}{c}{\textbf{$max_w$}} & \multicolumn{1}{c}{\textbf{$avg_w$}} & \multicolumn{1}{c}{\textbf{$max_w$}} & \multicolumn{1}{c}{\textbf{$avg_w$}} & \multicolumn{1}{c}{\textbf{$max_w$}} & \multicolumn{1}{c}{\textbf{$avg_w$}} & \multicolumn{1}{c}{\textbf{$value_w$}} & \multicolumn{1}{c}{\textbf{$value_w$}} & \multicolumn{1}{c}{\textbf{$value_w$}} \\
\hline

fe\_4elt2\hspace{1.55cm} & \textbf{428029} & \textbf{428029} & \textbf{428029} & \textbf{428029} & 427042 & 426766.60 & N/A & N/A & 423824 \\[-1pt]
fe\_body & \textbf{1680182} & \textbf{1680182} & \textbf{1680182} & 1680110.80 & N/A & N/A & 1680166 & 1670783 & 1675435 \\[-1pt]
fe\_ocean & 7175687 & \textbf{7162419.10} & 6412128 & 6359148.60 & 6921036 & 6908666.80 & \textbf{7248581} & \textbf{7248581} & 6803672 \\[-1pt]
fe\_pwt & \textbf{1178683} & \textbf{1178580.20} & 1161745 & 1156460.20 & N/A & N/A & 1178434 & 1116090 & 1158822 \\[-1pt]
fe\_rotor & \textbf{2664325} & \textbf{2663756.40} & 2513611 & 2506356.60 & N/A & N/A & 2642600 & N/A & 2591456 \\[-1pt]
fe\_sphere & \textbf{617816} & \textbf{617816} & \textbf{617816} & \textbf{617816} & N/A & N/A & \textbf{617816} & 595507 & 608401 \\[-1pt]
fe\_tooth & \textbf{3033298} & \textbf{3033298} & \textbf{3033298} & \textbf{3033298} & 3029582 & 3028853 & N/A & 3028675 & 3031490 \\

\hline
total & \textbf{6} & \textbf{7} & 4 & 3 & 0 & 0 & 2 & 1 & 0 \\
\hline
\end{tabular}}

\label{Experimental-results-on-fe}
\end{table*}

Concerning the Sparse dataset, DynLS still demonstrated outstanding performance. It achieved the highest \(max_w\) values in 44 (out of 45) instances and the highest \(avg_w\) values in 42 instances, surpassing all other algorithms, as illustrated in Table S5 of the supplementary file Section S5. The Cyclic-Fast algorithm secured the second position, attaining the highest \(max_w\) and \(avg_w\) values in 35 instances. Similarly, the Solve algorithm performed comparable to Cyclic-Fast, achieving the highest $value_w$ values in 27 instances. In contrast, the performance of other algorithms was less impressive; the HILS algorithm achieved only four highest \(max_w\) values and four highest \(avg_w\) values, whereas the HtWIS algorithm recorded 10 highest $value_w$ values.

\subsubsection{Comparison of solution quality on Family B}
In this family of datasets, the DynLS algorithm was compared to all five state-of-the-art algorithms: HILS, HtWIS, Solve, Cyclic-Fast, and m$^2$wis + s. Across the various datasets, DynLS consistently exhibited notable outperformance relative to the other algorithms. Specifically, DynLS achieved the highest values for both \(max_w\) and \(avg_w\) across all instances in the Snap and Ssmc benchmarks. In comparison, other algorithms often fell short of reaching the highest \(max_w\) (or $value_w$) or \(avg_w\) values in certain instances, particularly within large-scale datasets, as demonstrated in Tables \ref{Experimental-results-on-snap} and \ref{Experimental-results-on-ssmc}. Next, we will outline the performance results of these algorithms on additional datasets.

\begin{table*}[!h]
\centering
\caption{Experimental results on Snap.}
\resizebox{1.0\linewidth}{!}{
\begin{tabular}{l|lllllllll} 
\hline
\multirow{2}{*}{\textbf{Graph}} &
\multicolumn{2}{c}{\textbf{DynLS}} &
\multicolumn{2}{c}{\textbf{Cyclic-Fast}} &
\multicolumn{2}{c}{\textbf{HILS}} &
\multicolumn{1}{c}{\textbf{m$^2$wis + s}} &
\multicolumn{1}{c}{\textbf{Solve}} &
\multicolumn{1}{c}{\textbf{HtWIS}}
 \\   

\cmidrule(r){2-10} & \multicolumn{1}{c}{\textbf{$max_w$}} & \multicolumn{1}{c}{\textbf{$avg_w$}} & \multicolumn{1}{c}{\textbf{$max_w$}} & \multicolumn{1}{c}{\textbf{$avg_w$}} & \multicolumn{1}{c}{\textbf{$max_w$}} & \multicolumn{1}{c}{\textbf{$avg_w$}} & \multicolumn{1}{c}{\textbf{$value_w$}} & \multicolumn{1}{c}{\textbf{$value_w$}} & \multicolumn{1}{c}{\textbf{$value_w$}} \\
\hline
as-skitter\hspace{1.9cm} & \textbf{124157729} & \textbf{124157729} & 124154023 & 124154008 & N/A & N/A & 124157712 & 124153425 & 124141373 \\[-1pt]
ca-AstroPh & \textbf{797510} & \textbf{797510} & \textbf{797510} & \textbf{797510} & 797500 & 797411.60 & \textbf{797510} & \textbf{797510} & \textbf{797510} \\[-1pt]
ca-CondMat & \textbf{1147950} & \textbf{1147950} & \textbf{1147950} & \textbf{1147950} & 1147929 & 1147848.60 & \textbf{1147950} & \textbf{1147950} & \textbf{1147950} \\[-1pt]
ca-GrQc & \textbf{286489} & \textbf{286489} & \textbf{286489} & \textbf{286489} & \textbf{286489} & \textbf{286489} & \textbf{286489} & \textbf{286489} & \textbf{286489} \\[-1pt]
ca-HepPh & \textbf{581039} & \textbf{581039} & \textbf{581039} & \textbf{581039} & \textbf{581039} & 581030.20 & \textbf{581039} & \textbf{581039} & \textbf{581039} \\[-1pt]
ca-HepTh & \textbf{562004} & \textbf{562004} & \textbf{562004} & \textbf{562004} & \textbf{562004} & 561994.80 & \textbf{562004} & \textbf{562004} & \textbf{562004} \\[-1pt]
email-Enron & \textbf{2464935} & \textbf{2464935} & \textbf{2464935} & \textbf{2464935} & 2464817 & 2464677.40 & \textbf{2464935} & \textbf{2464935} & \textbf{2464935} \\[-1pt]
email-EuAll & \textbf{25286322} & \textbf{25286322} & \textbf{25286322} & \textbf{25286322} & 25286088 & 25285402.20 & \textbf{25286322} & \textbf{25286322} & \textbf{25286322} \\[-1pt]
p2p-G.04 & \textbf{679111} & \textbf{679111} & \textbf{679111} & \textbf{679111} & \textbf{679111} & 679097.80 & \textbf{679111} & \textbf{679111} & 679085 \\[-1pt]
p2p-G.05 & \textbf{554943} & \textbf{554943} & \textbf{554943} & \textbf{554943} & 554856 & 554846.40 & \textbf{554943} & \textbf{554943} & \textbf{554943} \\[-1pt]
p2p-G.06 & \textbf{548612} & \textbf{548612} & \textbf{548612} & \textbf{548612} & \textbf{548612} & 548609.60 & \textbf{548612} & \textbf{548612} & \textbf{548612} \\[-1pt]
p2p-G.08 & \textbf{434577} & \textbf{434577} & \textbf{434577} & \textbf{434577} & \textbf{434577} & \textbf{434577} & \textbf{434577} & \textbf{434577} & \textbf{434577} \\[-1pt]
p2p-G.09 & \textbf{568439} & \textbf{568439} & \textbf{568439} & \textbf{568439} & \textbf{568439} & \textbf{568439} & \textbf{568439} & \textbf{568439} & \textbf{568439} \\[-1pt]
p2p-G.24 & \textbf{1984567} & \textbf{1984567} & \textbf{1984567} & \textbf{1984567} & 1984556 & 1984532.80 & \textbf{1984567} & \textbf{1984567} & \textbf{1984567} \\[-1pt]
p2p-G.25 & \textbf{1701967} & \textbf{1701967} & \textbf{1701967} & \textbf{1701967} & \textbf{1701967} & \textbf{1701967} & \textbf{1701967} & \textbf{1701967} & \textbf{1701967} \\[-1pt]
p2p-G.30 & \textbf{2787907} & \textbf{2787907} & \textbf{2787907} & \textbf{2787907} & 2787897 & 2787850 & \textbf{2787907} & \textbf{2787907} & 2787902 \\[-1pt]
p2p-G.31 & \textbf{4776986} & \textbf{4776986} & \textbf{4776986} & \textbf{4776986} & 4776923 & 4776853 & \textbf{4776986} & \textbf{4776986} & 4776925 \\[-1pt]
roadNet-CA & \textbf{111360828} & \textbf{111360828} & \textbf{111360828} & \textbf{111360828} & N/A & N/A & \textbf{111360828} & \textbf{111360828} & 111325524 \\[-1pt]
roadNet-PA & \textbf{61731589} & \textbf{61731589} & \textbf{61731589} & \textbf{61731589} & 54128628 & 54027129.20 & \textbf{61731589} & \textbf{61731589} & 61710606 \\[-1pt]
roadNet-TX & \textbf{78599946} & \textbf{78599946} & \textbf{78599946} & \textbf{78599946} & N/A & N/A & \textbf{78599946} & \textbf{78599946} & 78575460 \\[-1pt]
soc-Epinions1 & \textbf{5690970} & \textbf{5690970} & \textbf{5690970} & \textbf{5690970} & 5690246 & 5689963 & \textbf{5690970} & \textbf{5690970} & \textbf{5690970} \\[-1pt]
soc-LiveJournal1 & \textbf{284036239} & \textbf{284036239} & 284036232 & 284036174.80 & N/A & N/A & 284036236 & 284021396 & 284019992 \\[-1pt]
s.p.r. & \textbf{83979964} & \textbf{83977208.90} & N/A & N/A & N/A & N/A & 83720972 & 79366658 & 83920370 \\[-1pt]
soc-S.0811 & \textbf{5660899} & \textbf{5660899} & \textbf{5660899} & \textbf{5660899} & 5659398 & 5658825.60 & \textbf{5660899} & \textbf{5660899} & \textbf{5660899} \\[-1pt]
soc-S.0902 & \textbf{5971849} & \textbf{5971849} & \textbf{5971849} & \textbf{5971849} & 5969604 & 5968399 & \textbf{5971849} & \textbf{5971849} & 5971821 \\[-1pt]
web-BerkStan & \textbf{43907482} & \textbf{43907482} & \textbf{43907482} & \textbf{43907482} & 40947745 & 40722845 & \textbf{43907482} & 43899932 & 43889843 \\[-1pt]
web-Google & \textbf{56326504} & \textbf{56326504} & \textbf{56326504} & \textbf{56326504} & 52329458 & 51987485.20 & \textbf{56326504} & \textbf{56326504} & 56323382 \\[-1pt]
web-NotreDame & \textbf{26016941} & \textbf{26016941} & \textbf{26016941} & \textbf{26016941} & 25818162 & 25762580.60 & \textbf{26016941} & \textbf{26016941} & 26013830 \\[-1pt]
web-Stanford & \textbf{17792930} & \textbf{17792930} & \textbf{17792930} & \textbf{17792930} & 17073222 & 16974535 & \textbf{17792930} & \textbf{17792930} & 17789430 \\[-1pt]
wiki-Talk & \textbf{235837346} & \textbf{235837346} & \textbf{235837346} & \textbf{235837346} & 235401815 & 235377297.60 & \textbf{235837346} & \textbf{235837346} & \textbf{235837346} \\[-1pt]
wiki-Vote & \textbf{500079} & \textbf{500079} & \textbf{500079} & \textbf{500079} & 500057 & 500047.20 & \textbf{500079} & \textbf{500079} & 499740 \\
\hline
total & \textbf{31} & \textbf{31} & 28 & 28 & 8 & 4 & 28 & 27 & 16 \\
\hline
\end{tabular}}

\label{Experimental-results-on-snap}
\end{table*}


\begin{table*}[!h]
\centering
\caption{Experimental results on Ssmc.}
\resizebox{1.0\linewidth}{!}{
\begin{tabular}{l|lllllllll} 
\hline
\multirow{2}{*}{\textbf{Graph}} &
\multicolumn{2}{c}{\textbf{DynLS}} &
\multicolumn{2}{c}{\textbf{Cyclic-Fast}} &
\multicolumn{2}{c}{\textbf{HILS}} &
\multicolumn{1}{c}{\textbf{m$^2$wis + s}} &
\multicolumn{1}{c}{\textbf{Solve}} &
\multicolumn{1}{c}{\textbf{HtWIS}}
 \\   

\cmidrule(r){2-10} & \multicolumn{1}{c}{\textbf{$max_w$}} & \multicolumn{1}{c}{\textbf{$avg_w$}} & \multicolumn{1}{c}{\textbf{$max_w$}} & \multicolumn{1}{c}{\textbf{$avg_w$}} & \multicolumn{1}{c}{\textbf{$max_w$}} & \multicolumn{1}{c}{\textbf{$avg_w$}} & \multicolumn{1}{c}{\textbf{$value_w$}} & \multicolumn{1}{c}{\textbf{$value_w$}} & \multicolumn{1}{c}{\textbf{$value_w$}} \\
\hline
ca2010\hspace{1.9cm} & \textbf{16869550} & \textbf{16869428.80} & 16862232 & 16855924.80 & 14811354 & 14688771.20 & \textbf{16869550} & 16712752 & 16792827 \\[-1pt]
fl2010 & \textbf{8743506} & \textbf{8743506} & \textbf{8743506} & 8743096 & 8146034 & 8133111.80 & \textbf{8743506} & 8680997 & 8719272 \\[-1pt]
ga2010 & \textbf{4644417} & \textbf{4644417} & \textbf{4644417} & \textbf{4644417} & 4448397 & 4402020 & \textbf{4644417} & \textbf{4644417} & 4639891 \\[-1pt]
il2010 & \textbf{5998539} & \textbf{5998539} & \textbf{5998539} & \textbf{5998539} & 5391544 & 5285554.40 & \textbf{5998539} & 5914067 & 5963974 \\[-1pt]
nh2010 & \textbf{588996} & \textbf{588996} & \textbf{588996} & \textbf{588996} & 587146 & 586840.80 & \textbf{588996} & \textbf{588996} & 587059 \\[-1pt]
ri2010 & \textbf{459275} & \textbf{459275} & \textbf{459275} & \textbf{459275} & 456950 & 456484 & \textbf{459275} & \textbf{459275} & 457108 \\
\hline
total & \textbf{6} & \textbf{6} & 5 & 4 & 0 & 0 & \textbf{6} & 3 & 0 \\
\hline
\end{tabular}}

\label{Experimental-results-on-ssmc}
\end{table*}

The results for the Fe benchmark were presented in Table \ref{Experimental-results-on-fe}. In this dataset, the DynLS algorithm attained the highest \(max_w\) and \(avg_w\) values for all instances, except one instance ``fe\_ocean.'' The Cyclic-fast algorithm demonstrated suboptimal performance, achieving the highest \(max_w\) values in four instances and the highest \(avg_w\) values in three instances. In contrast, other algorithms did not yield satisfactory results. Notably, the HILS algorithm was unable to produce feasible solutions for several instances within the 1200-second time limit, including ``fe\_body,'' ``fe\_pwt,'' ``fe\_rotor,'' and ``fe\_sphere,'' and it did not obtain any highest \(max_w\) or \(avg_w\) value.

The findings for the Osm benchmark were outlined in Table S7 of the supplementary file Section S5. In this dataset, the m$^2$wis + s algorithm consistently attained the highest $value_w$ values across all 34 instances, exceeding the performance of DynLS on four specific instances: ``d.o.c.3,'' ``hawaii3,'' ``kentucky3,'' and ``rhode-i.3.'' Nonetheless, the discrepancies in the solutions generated by DynLS and m$^2$wis + s were marginal, indicating that DynLS demonstrated performance that is essentially comparable to that of m$^2$wis + s within this dataset. Following these two algorithms, Cyclic-Fast achieved the highest $max_w$ values in 27 instances. In contrast, the other three algorithms exhibited subpar performance, with HILS and Solve recording the highest $max_w$ or $value_w$ values in 9 and 22 instances, respectively. Notably, HtWIS did not identify any of the highest $value_w$ values. Regarding average performance, DynLS outperformed HILS and Cyclic-Fast, attaining the highest  $avg_w$ values in 32 instances. Conversely, Cyclic-Fast and HILS achieved the highest $avg_w$ values in 21 and 5 instances, respectively. This result underscores the stability and reliability of the DynLS algorithm in its application to this dataset.

Finally, the results for the Mesh benchmark are presented in Table S6 of the supplementary file Section S5. In this dataset, the unique structure of the instances facilitated the effectiveness of the reduction rules in DynLS, Cyclic-Fast, and Solve. As a result, these three algorithms achieved the highest \(max_w\) (or $value_w$) values across all 15 instances. These solutions are optimal, given that Solve is an exact algorithm. The m$^2$wis + s algorithm demonstrated comparable performance, securing the best solutions in 14 instances. In contrast, the HILS and HtWIS algorithms underperformed. Regarding average performance, both the DynLS and Cyclic-Fast algorithms recorded the highest \(avg_w\) values for all instances. In contrast, the HILS algorithm struggled, failing to obtain the highest \(avg_w\) values for any instance.

    

In summary, the DynLS algorithm consistently outperforms all baseline algorithms across various datasets, offering superior best and average solutions in nearly all cases. The sole exception is that DynLS shows slightly lower performance than the m$^2$wis + s algorithm on the Osm benchmark regarding the best solution. Specifically, DynLS achieved the highest $max_w$ values in 262 instances in Family A, surpassing HILS by 28 instances, HtWIS by 30, Solve by 76, and Cyclic-Fast by 98. When considering average solutions, DynLS excelled in these 261 instances, leading over HILS by 17 instances and Cyclic-Fast by 92. In Family B, DynLS attained the highest $max_w$ values in 88 instances, exceeding HILS by 17 instances, HtWIS by 16, Solve by 68, Cyclic-Fast by 79, and m$^2$wis + s by 84. For average solutions, DynLS achieved the highest $avg_w$ values in 91 instances, outperforming HILS by nine instances and Cyclic-Fast by 71. These findings illustrate that the DynLS algorithm can produce high-quality solutions across diverse instances, consistently surpassing the performance of other established algorithms.

\vspace{-0.3cm}
\subsection{Comparison of convergence time}
The convergence time, represented as ``$time$,'' is the minimum duration required for an algorithm to compute its final solution for a specific instance. To demonstrate the effectiveness of our DynLS algorithm, we compared the convergence times of all algorithms across various benchmark instances. For fairness, we focused only on instances where at least two algorithms achieved the maximum values of \( max_w \) or \( value_w \). We analyzed 195 instances, comprising 111 from family A and 84 from family B.

Table S8 of the supplementary file Section S6 details the convergence times of DynLS, Cyclic-Fast, HILS, Solve, and HtWIS across instances from family A. Among others, the Cyclic-Fast algorithm stands out by achieving the shortest convergence time in 91 instances, all of which are less than 10 seconds. However, it is worth noting that Cyclic-Fast did not attain the highest $max_w$ value in 13 instances. In contrast, while DynLS had fewer instances with the shortest convergence times, the convergence times for DynLS are very close to those of Cyclic-Fast within this family, with 87 instances recording convergence times under 10 seconds. Importantly, DynLS did not fail to achieve the highest $max_w$ value in any instance. The performance of the other algorithms was comparatively weaker. The Solve and HILS algorithms secured the shortest convergence times in two and five instances, respectively. However, HILS generally exhibited longer convergence times across most instances, often extending into several hundred seconds. Meanwhile, the HtWIS algorithm fell short overall, failing to achieve the shortest convergence time in any instance.

Table S9 of the supplementary file Section S6 provides the convergence times for various algorithms across instances from family B. Again, the Cyclic-Fast algorithm demonstrated superior performance, achieving the shortest convergence time in 72 instances and in 75 instances with convergence times of less than 10 seconds. However, it did not attain the highest $max_w$ value in five instances. The DynLS algorithm ranked second, achieving the shortest convergence time in 12 instances. Even in cases where it did not secure the shortest time, its performance was very close to that of the Cyclic-Fast algorithm, with 73 instances recording convergence times under 10 seconds. DynLS failed to achieve the highest $max_w$ value in one instance. In contrast, the HILS, HtWIS, and Solve algorithms did not achieve the best convergence time in any instances and generally performed poorly. Furthermore, while the m$^2$wis + s algorithm achieved the best convergence time in one instance, it exceeded 1000 seconds in many other cases.

Overall, when evaluating the instances with the minimum convergence times, the Cyclic-Fast algorithm outperforms the DynLS algorithm, while DynLS surpasses other algorithms. Although DynLS does not identify as many instances with the minimum convergence time as Cyclic-Fast, the difference in their convergence times is minimal. Moreover, the Cyclic-Fast algorithm frequently struggles to achieve the highest \(max_w\) values across various instances. In 165 instances, the Cyclic-Fast algorithm significantly trails behind DynLS regarding the \(max_w\) value. This suggests that the DynLS algorithm has a substantial advantage in achieving the best solutions and excels in convergence times.

\subsection{Ablation Experiments}\label{sec5-4}
We performed two ablation experiments to evaluate each component's performance and contributions within the DynLS algorithm. Specifically, we removed the following two components from the DynLS algorithm: one is the scores-based adaptive vertex perturbation (SAVP) strategy, and the other is the region location mechanism and the reward-based complex vertex exchange module (RLM \& ComLS, as detailed in lines 11-29 and line 28 of Algorithm \ref{alg-main}). This approach yielded two algorithm variants: DynLS-SAVP and DynLS-RLM \& ComLS, which we compared to the original DynLS algorithm. All experiments were executed over 1000 seconds, with each instance run independently five times using seeds 1 through 5. For each instance, we reported both the best solution (\( max_w \)) and the average solution (\( avg_w \)) from the five trials. In the results table, we included only those instances where \( max_w \) values differed between the compared algorithms. 

Table S10 of the supplementary file Section S7 compares the DynLS algorithm with its DynLS-SAVP variant. In analyzing 106 retained instances, DynLS achieved the highest $max_w$ value across all instances, whereas DynLS-SAVP recorded a $max_w$ value of zero in every case. For the $avg_w$ metric, DynLS delivered high-quality solutions in 99 instances, while DynLS-SAVP provided solutions in only seven instances. These results highlight the SAVP component's considerable contribution to the DynLS algorithm's overall efficacy.
Table S11 of the supplementary file Section S7 illustrates the comparative performance of DynLS and DynLS-RLM\&ComLS. Among the 103 instances retained for review, DynLS achieved the highest $max_w$ value in 94 instances, while DynLS-RLM\&ComLS reached this mark in only nine instances. Regarding $avg_w$, DynLS produced high-quality solutions in 77 instances, compared to DynLS-RLM\&ComLS, which provided 26 high-quality solutions. These findings indicate that the region location mechanism and reward-based complex vertex exchange modules integrated within DynLS demonstrate superior performance in many cases, yielding a greater quantity and higher quality of solutions. 


In conclusion, the findings from the various experiments demonstrate that the components of DynLS are essential and irreplaceable. These components collectively establish the DynLS algorithm's foundation, significantly enhancing its effectiveness in solving the MWIS problem.

\subsection{Statistical test}
We employ the Friedman test \cite{raj2023novel} to further validate the effectiveness of the DynLS algorithm. The Friedman test is a non-parametric statistical method used to analyze whether there are significant differences in the performance of multiple algorithms across multiple test instances. In this study, we rank the algorithms for each instance based on the quality of their solutions (with higher quality receiving better rankings). The final ranking for a dataset is obtained by averaging the rankings across all instances within that dataset. An algorithm's overall ranking (represented by ``Overall rank'') is then calculated as the mean of its final rankings across all datasets. 

\vspace{-0.2cm}

\begin{table}[H]
\centering
\caption{Friedman test results for ranking compared algorithms based on average ranking of solution quality}
\resizebox{1\linewidth}{!}{
\begin{tabular}{l|cccccc} 
\hline
\multirow{2}{*}{Benchmark} &
\multicolumn{6}{c}{Algorithm}  \\
\cmidrule{2-7} & 
\multicolumn{1}{c}{\textbf{DynLS}} &
\multicolumn{1}{c}{\textbf{Cyclic-Fast}} &
\multicolumn{1}{c}{\textbf{HILS}} &
\multicolumn{1}{c}{\textbf{m$^2$wis + s}} &
\multicolumn{1}{c}{\textbf{Solve}} &
\multicolumn{1}{c}{\textbf{HtWIS}} \\ 
\hline
Dimacs & \textbf{1.03} & 2.27 & 3.29 & - & 3.24 &  2.97  \\ [-1pt]
Facebook & \textbf{1.02} & 2.94 & 1.96 & - & 4.93 &  3.92 \\ [-1pt]
Network & \textbf{1.00} & 1.49 & 4.49 & - & 2.00 &  2.81 \\ [-1pt]
Sparse & \textbf{1.02} & 1.42 & 4.18 & - & 2.22 &  3.38 \\
\hline
Overall rank & \textbf{1.0175} & 2.03 & 3.48 & - & 3.0975 & 3.27 \\
\hline
Fe & \textbf{1.29} & 2.43 & 4.86 & 2.86 & 4.23 & 3.86 \\ [-1pt]
Mesh & \textbf{1.00} & \textbf{1.00} & 5.73 & 1.33 & \textbf{1.00} & 5.13 \\ [-1pt]

Osm & 1.21 & 1.68 & 3.38 & \textbf{1.00} & 2.29 & 5.85 \\ [-1pt]
Snap & \textbf{1.00} & 1.26 & 4.65 & 1.13 & 1.39 & 2.90 \\ [-1pt]
Ssmc & \textbf{1.00} & 1.33 & 5.83 & \textbf{1.00} & 3.00 & 4.67 \\
\hline
Overall rank & \textbf{1.1} & 1.54 & 4.89 & 1.464 & 2.382 & 4.482 \\
\hline
\end{tabular}}

\label{overall}
\end{table}

The analysis results are presented in Table \ref{overall}, while the corresponding ranking results are illustrated in Fig. S1 (a) and Fig. S1 (b) in Section S8 of the supplementary file. 
{The area and color of the squares in the figure both represent the algorithm's average ranking performance on the dataset. A smaller area combined with a bluer hue indicates a lower average ranking, while a larger area with a redder color signifies a higher average ranking. It is evident from the figure that the squares corresponding to the DynLS algorithm generally exhibit smaller areas and a more pronounced blue hue across all datasets, with the exception of Osm. This suggests that the DynLS algorithm tends to achieve lower rankings, demonstrating the relatively high quality of the solutions it provides. 
}
Specifically, DynLS achieved the lowest rankings across all datasets in family A, with rankings of 1.03, 1.02, 1.00, and 1.02 for the Dimacs, Facebook, Network, and Sparse datasets, respectively. Its overall rank of 1.0175 was the smallest among all algorithms, demonstrating superior performance. The overall ranks of the other algorithms were as follows: Cyclic-Fast (2.03), HtWIS (3.27), Solve (3.0975), and HILS (3.48). In family B, DynLS also attained the lowest rankings on the Fe, Mesh, Snap, and Ssmc datasets, with rankings of 1.29, 1.00, 1.00, and 1.00, respectively. On the Osm dataset, the m$^2$wis + s algorithm achieved the lowest ranking (1.00), and both m$^2$wis+s and DynLS shared the lowest ranking on the Ssmc dataset. Additionally, Cyclic-Fast and Solve tied with DynLS for the lowest ranking on the Mesh dataset. Considering the overall rank, DynLS again outperformed the other algorithms with the smallest overall rank of 1.1. The overall ranks of other algorithms were as follows: m$^2$wis+s (1.464), Cyclic-Fast (1.54), Solve (2.382), HtWIS (4.482), and HILS (4.89). In summary, DynLS consistently achieved the lowest rankings among all five compared algorithms, demonstrating its effectiveness and superior performance.

\section{Conclusion}\label{sec-con}
In this paper, we developed a dynamic local search framework for the MWIS problem named Dynamic Location Search (DynLS). The algorithm begins by kernelizing the original graph \( G \) using reduction rules. Different methods are applied to construct an initial solution based on the characteristics of the graph. Additionally, we introduced a scores-based adaptive vertex perturbation strategy based on four distinct scores to address the limitations of previous perturbation strategies (such as singular selection criteria and a fixed number of perturbed vertices),  with the region location mechanism implemented on the \textit{local graph}. To enhance solution space exploration, we developed a set of complex vertex exchange processes that leverage a reward mechanism to apply these exchange rules efficiently. This leads to ComLS, which can reopen new search spaces when the location mechanism becomes ineffective. Experimental results show that, compared to other state-of-the-art algorithms, DynLS can find more high-quality solutions across multiple datasets within 1000 seconds.

DynLS exhibits robust performance across a wide range of datasets; however, it does have certain limitations in a few specific instances. For example, it fails to achieve best solutions in four cases within the Osm dataset: ``d.o.c.3,'' ``hawaii3,'' ``kentucky3,'' and ``rhode-i.3''.  
{ Regarding these four instances, d.o.c.3 consists of 46221 vertices and 27729137 edges, hawaii3 has 28006 vertices and 49444921 edges, kentucky3 contains 19095 vertices and 59533630 edges, and rhode-i.3 has 15124 vertices and 12622219 edges. These instances exhibit a high density, which adversely affects the performance of DynLS. This may be due to the nature of the local search process, where the algorithm seeks improvement operations that satisfy specific criteria. In cases where a vertex has a high degree, the time required to search for valid neighboring vertices increases, thereby limiting effective vertex exchanges. Furthermore, the complex nature of the vertex exchange process means that many vertices may be involved. This is particularly disadvantageous for dense graphs, as changes to one vertex can impact numerous others, potentially leading the solution away from optimality. In contrast, the algorithm tends to explore the \textit{local graph} more thoroughly in sparser instances, resulting in improved performance on this type of graph.

DynLS struggles with certain instances, particularly those characterized by high edge density, suggesting that extensive neighborhoods can effectively capture the attributes of most vertices.
}
This insight prompts us to enhance DynLS by integrating Generative AI and reinforcement learning (RL) techniques. Dense vertex neighborhoods in graphs facilitate faster information propagation and more accurate feature extraction, which can be further improved by using Generative AI to generate enriched feature representations. {For example, a generative model can be developed to create a mapping relationship that links graph data with solution strategies. By utilizing smaller, edge-dense graph instances as the training set, we can employ exact algorithms to solve them within a reasonable time frame, generating solutions that serve as labels. In this process, the vertex weights and neighborhood relationships of the graph can be used as input features. The high edge density of the graph may enable the training process to capture features from a broader neighborhood of vertices, effectively learning from the data and enhancing the model’s predictive accuracy \cite{Tang2025}. The output of the model will be a probability distribution across all vertices, indicating the likelihood of each vertex being part of the optimal solution. Ultimately, applying the predictive capability of the model to large-scale, high-density graphs allows for the estimation of vertex probabilities. These probabilities can then be used as a basis for constructing initial solutions or as criteria for selecting vertices during optimization, thereby enhancing the algorithm's robustness.
}
{Additionally, reinforcement learning (RL) can be employed to enhance the search strategy. In this context, we can consider the states of all vertices—whether included in the solution or not—as the ``state'' in RL, while various vertex exchange strategies or perturbation mechanisms serve as the ``actions.'' Executing an action alters the state of the solution, and the change in weight before and after this adjustment is defined as the ``reward.'' The objective is to steer the solution towards global optimality by maximizing the reward. This integration seeks to bolster the capabilities of DynLS in efficiently tackling complex instances.
}


The implementation of the DynLS algorithm in intelligent transportation systems (ITSs) offers substantial potential. ITSs play a vital role in contemporary traffic management and optimization, enhancing traffic flow and safety. By utilizing the DynLS algorithm, ITSs can more effectively navigate complex traffic situations, including traffic congestion, route planning, and traffic decision-making. 
For instance, "MT-W-200" represents a graph derived from the real-world long-haul vehicle routing problem, focusing on optimizing transportation scheduling within Amazon's logistics. The objective is to identify a set of non-conflicting paths that maximizes the sum of their weights, thereby enhancing the efficiency of transportation scheduling. This scenario can be transformed into the MWIS problem \cite{dong2021new}, which can then be addressed using the DynLS algorithm. In this instance, the DynLS algorithm achieved an independent set with a total weight of 383933638 within 1000 seconds, outperforming other comparison algorithms.

\bibliographystyle{IEEEtran}
\bibliography{example}

\vfill

\end{document}